\setlist{nolistsep}
\newcolumntype{C}{>{$}c<{$}}
\newcolumntype{L}{>{$}l<{$}}
\newcolumntype{R}{>{$}r<{$}}
\pgfplotsset{compat=1.14}
\algnewcommand\algorithmicinput{\textbf{Input:}}
\algnewcommand\Input{\item[\algorithmicinput]}
\algnewcommand\algorithmicoutput{\textbf{Output:}}
\algnewcommand\Output{\item[\algorithmicoutput]}
\newcommand{\algparbox}[1]{\parbox[t]{\dimexpr\linewidth-\algorithmicindent}{#1\strut}}
\newcommand{\StateN}[1]{\State \algparbox{#1}}
\newlength{\smallfigwidth}
\newlength{\smallfigheight}
\newlength{\smallfigsep}
\newlength{\legendheight}
\declaretheorem{theorem}
\declaretheorem[numberlike=theorem]{lemma}
\declaretheorem[numberlike=theorem]{proposition}
\newcommand{\Exp}[1]{\mathbf{E}\left[ #1 \right]}
\newcommand{\Prob}[1]{\mathbf{Pr}\left( #1 \right)}
\DeclareMathOperator*{\argmin}{arg\,min}
\DeclareMathOperator*{\argmax}{arg\,max}
\newcommand{\norm}[1]{\left|\left| #1 \right|\right|_2}
\newcommand{\abs}[1]{\lvert #1\rvert}
\newcommand{\cost}{\textit{cost}}
\renewcommand{\lg}{\log}
\newcommand{\LB}{\textit{LB}}
\newcommand{\poly}{\operatorname{poly}}
\newcommand{\score}{\textit{score}}
\DeclareRobustCommand{\ALG}{%
	\ifmmode
		\operatorname{ONL}
	\else
		\text{ONL}\xspace
	\fi
}
\DeclareRobustCommand{\OFF}{%
	\ifmmode
		\operatorname{OPT}
	\else
		\text{OPT}\xspace
	\fi
}
\DeclareRobustCommand{\APPROXALGO}{%
	\ifmmode
		\operatorname{APPROX}
	\else
		\text{APPROX}\xspace
	\fi
}
\newcommand{\sofa}{\textsf{sofa}\xspace}
\newcommand{\sofaauto}{\textsf{sofa-auto}\xspace}
\newcommand{\staticsofa}{\textsf{static sofa}\xspace}
\newcommand{\AlgoName}{\sofa}
\newcommand{\AlgoToy}{\textsf{Greedy-clustering}\xspace}
\newcommand{\MG}{\textsf{MG}\xspace}
\newcommand{\basso}{\textsf{basso}\xspace}
\newcommand{\asso}{\textsf{asso}\xspace}
\newcommand{\zha}{\textsf{RSzhaEtAl}\xspace}
\newcommand{\dhillon}{\textsf{RSdhillon}\xspace}
\newcommand{\News}{\textsf{20News}\xspace}
\newcommand{\Movielens}{\textsf{Movie}\xspace}
\newcommand{\Book}{\textsf{Book}\xspace}
\newcommand{\Wikipedia}{\textsf{Wiki}\xspace}
\newcommand{\Reuters}{\textsf{Reuters}\xspace}
\newcommand{\Flickr}{\textsf{Flickr}\xspace}
\title{Biclustering and Boolean Matrix Factorization in Data Streams\footnote{This
	technical report is the slightly extended version of a
	paper~\cite{neumann20biclustering} which appeared at VLDB'20.}}
\author{Stefan Neumann\footnote{KTH
		Royal Institute of Technology, Stockholm, Sweden. This work was done
		while SN was at the University of Vienna and during visits at Brown
		University and University of Easter Finland.  neum@kth.se.} \and 
	Pauli Miettinen\footnote{University of Eastern Finland, Kuopio, Finland.
	pauli.miettinen@uef.fi.}}
\date{}
\begin{document}

\maketitle

\graphicspath{{figs/}}

\begin{abstract}
	\noindent
	We study the clustering of bipartite graphs and Boolean matrix factorization
	in data streams.
	We consider a streaming setting in which the vertices from the \emph{left}
	side of the graph arrive one by one together with all of their incident
	edges.  We provide an algorithm that, after one pass over the stream,
	recovers the set of clusters on the \emph{right} side of the graph using
	sublinear space; to the best of our knowledge, this is the first algorithm
	with this property.  We also show that after a second pass over the stream,
	the left clusters of the bipartite graph can be recovered and we show how to
	extend our algorithm to solve the Boolean matrix factorization problem (by
	exploiting the correspondence of Boolean matrices and bipartite graphs).  We
	evaluate an implementation of the algorithm on synthetic data and on
	real-world data. On real-world datasets the algorithm is orders of
	magnitudes faster than a static baseline algorithm while providing quality
	results within a factor~2 of the baseline algorithm.  Our algorithm scales
	linearly in the number of edges in the graph.  Finally, we analyze the
	algorithm theoretically and provide sufficient conditions under which the
	algorithm recovers a set of planted clusters under a standard random graph
	model.
\end{abstract}

\section{Introduction}
\label{sec:bmf:introduction}

Bipartite graphs appear in many areas in which interactions of objects from two
different domains are observed. Hence, finding interesting clusters (also
called communities) in bipartite graphs is a fundamental and
well-researched problem with many applications; this problem is often
called \emph{biclustering}.  For example, in social networks the two domains
could be users and hashtags and an interaction corresponds to a user using a
certain hashtag; finding clusters in such a graph corresponds to finding groups
of hashtags used by the same users and groups of users using the same
hashtags~\cite{wang19sgp}. Biclustering has many applications across many
domains such as computational
biology~\cite{madeira04biclustering,eren13comparative}, text
mining~\cite{dhillon01coclustering} and finance~\cite{huang15biclustering}.

Many real-world bipartite graphs have three natural properties. First, the
numbers of vertices on both sides of the graphs are very large, while at the
same time their density is extremely low, i.e., the graphs are very sparse. For
example, consider a bipartite graph consisting of users on the left side of the
graph and movies on the right side of the graph, where an edge indicates that a
user rated a movie. Such graphs often consist of millions of users and movies,
but the average degree is constant.  Second, the degrees on one side of the
graph are usually bounded by a small constant, while on the other side of the
graph a few vertices have extremely large degrees.  Continuing the example from
above, note that users usually do not rate more than 1000~movies, but a small
number of popular movies is rated by millions of users.  The third property is
that the clusters on the high-degree side of the graph are usually
relatively small.  Again continuing the above example, those groups of movies
which are watched by the same users typically do not consist of more than
50~movies.

Furthermore, for many real-world bipartite graphs, it is natural to assume that
the left-side vertices appear in a data stream, for example,
in Natural Language Processing~\cite{goyal09streaming}, market basket analysis,
network traffic analysis and stock price analysis~\cite{calders14mining}.
For instance, in market basket analysis the left-side vertices correspond to
transactions in a supermarket and incident edges indicate which products were
bought.  To efficiently find interesting clusters in such datasets, we need to
develop streaming algorithms for the biclustering problem. Another motivation to
study the streaming setting is that current static algorithms do not scale to
the previously mentioned real-world graphs with millions of vertices, because
these methods have prohibitively high memory consumptions and running times.
Streaming algorithms could mitigate this issue due to improved memory efficiency
and speed.

\subsection*{Our Contributions}
We address this question and provide the first streaming
algorithms for the biclustering problem.  In particular, we study a streaming
setting in which the vertices from the left side of the bipartite graph arrive
one after another, together with all of their incident edges.  Then after a
single pass over the stream, the algorithm must output the set of right-side
clusters of the graph.  The algorithm is then allowed a second pass over the
stream in order to output the left-side clusters. See
Section~\ref{sec:bmf:preliminaries:biclustering} for the formal definition of the
problem.

To obtain our algorithms, we heavily exploit the previously mentioned properties
of real-world bipartite graphs. Formally, we assume that there exists a
number~$s$ such that the degree of all left-side vertices and the size of all
right-side clusters is at most~$s$. This implies that, in total, the graph
contains $O(ms)$ edges, where $m$ is the number of vertices on the left side of
the graph.

We introduce the \sofa algorithm which returns the right-side clusters of the graph
after a single pass over the stream and using sublinear memory. To the best our
of knowledge, \sofa is the first algorithm with this property. The running time of
\sofa is $O(ms \cdot k\lg m)$, where $k$ is the number of clusters to be recovered;
note that this running time is within a $O(k \lg m)$ factor of the size of the
graph.  During its running time, \sofa uses $O(ks \lg m)$ space; observe that
this space usage sublinear in the size in the size of the graph as long as
$s=o(m/\lg m)$ which is realistic in practice (as we argued before). Furthermore,
we show that the left-side clusters of the graph can be computed using a second
pass over the stream and using space $O(m)$, which is optimal since we have to
output a cluster assignment for each of the $m$ left-side vertices of the graph.

We also provide theoretical guarantees for a version of \sofa. We show that
under a standard random graph model, a version of \sofa returns a set of planted
ground-truth clusters with information-theoretically optimal memory usage; see
Theorem~\ref{thm:bmf:algo-toy} for details. We
also provide similar yet weaker guarantees for the practical version of \sofa.

Next, we show how \sofa can be extended to solve the Boolean matrix
factorization problem, which is popular in the data mining and machine learning
communities.
We obtain similar guarantees on space and run-time as above.
Unfortunately, we cannot provide any quality guarantees here, because the lower
bounds from~\cite{chandran16parameterized} rule out obtaining non-trivial
approximation ratios for practical BMF algorithms (see
Section~\ref{sec:bmf:related} for details). Thus, \sofa is a
heuristic for BMF, but our experiments show that it works well in practice.

We evaluate \sofa on synthetic as well as on real-world datasets. On
synthetically generated random graphs, our experiments show that \sofa returns
clusters, that are close to the planted ground-truth clusters and that its
running time scales linearly in the number of edges in the graph.
On real-world datasets, our experiments show that \sofa is orders of magnitudes
faster and more memory-efficient than a static baseline algorithm, while at the
same time achieving objective function values within factor~2 of the baseline.
In concrete terms, \sofa can process a graph with millions of vertices, for
which the static baseline algorithm runs out of memory, using only 500~MB of RAM
and, further, \sofa can process a graph with hundreds of thousands of edges
within less than three hours, while the baseline algorithm requires several days
to finish.

\textbf{Outline of the Paper.}
The paper is arranged as follows. In Section~\ref{sec:bmf:preliminaries} we formally
define the problems we study. Then in Section~\ref{sec:bmf:algorithm-right} we
introduce \sofa, which performs a single pass over the left side of a bipartite
graph and then returns the right-side clusters. We show how the left-side clusters
can be recovered during a second pass over the stream in
Section~\ref{sec:bmf:algorithm-left}. In Section~\ref{sec:bmf:implementation}, we discuss
certain adjustments of the algorithms that we made during the implementation and
then we evaluate \sofa experimentally in Section~\ref{sec:bmf:experiments}.
Section~\ref{sec:bmf:theory} contains our theoretical
analysis.
We discuss related work in Section~\ref{sec:bmf:related} and conclude in
Section~\ref{sec:bmf:conclusion}.

\section{Preliminaries}
\label{sec:bmf:preliminaries}
\label{sec:bmf:biclustering}
\label{sec:bmf:bmf}
\label{sec:bmf:relationship}
\label{sec:bmf:heavy-hitters}

In this section, we formally introduce the problems we study, we discuss their
relationship and we introduce an important subroutine of our algorithms.

\subsection{Biclustering in Random Graphs}
\label{sec:bmf:preliminaries:biclustering}
We study biclustering of random bipartite graphs. Let $G = (U \cup V, E)$ be a
bipartite graph, where $U$ is the set of vertices on the left side of the graph
and $V$ is the set of vertices on the right side of the graph. We assume that
$U$ is partitioned into subsets $U_1,\dots,U_k$ for $k>1$ and $V_1,\dots,V_k$
are subsets of $V$ (it is not necessary that the $V_j$ are mutually disjoint or
that their union is the set $V$).

Now let $p,q \in [0,1]$ be probabilities with $p > q$. In our random model,
vertices $u \in U_i$ have edges to vertices $v\in V_i$ with ``large'' probability and
to vertices in $v\in V_{j}$ with $i\neq j$ with ``low'' probability. More
concretely, we assume that
\begin{align}
\label{eq:bmf:random-edges}
	\Prob{ (u,v) \in E } =
	\begin{cases}
		p, & \text{if } u \in U_i, v \in V_i, \\
		q, & \text{if } u \in U_i, v \in V_{j}, i \neq j.
	\end{cases}
\end{align}

Now the computational problem is as follows. We assume that our algorithms
obtain as input a graph $G$ generated from the random model above and the
parameters $k$, $p$ and $q$ (but have no knowledge about the sets
$U_i$ and $V_j$). The task is to recover the clusters $U_i$ and
$V_j$ from $G$; that is, the algorithm must output clusters
$\tilde{U_1},\dots,\tilde{U_k} \subseteq U$ and $\tilde{V_1},\dots,\tilde{V_k}
\subseteq V$, such that $\{ \tilde{U_1},\dots,\tilde{U_k} \} = \{ U_1,\dots,U_k
\}$ and $\{ \tilde{V_1},\dots,\tilde{V_k} \} = \{ V_1,\dots,V_k \}$.

We decided to study the above random graph model for two reasons. First, the
model has been widely studied theoretically, e.g., in machine
learning~\cite{neumann18bipartite,zhou19analysis} and in
mathematics~\cite{abbe16community,zhou18optimal},  and similar models have been
used to derive practical algorithms~\cite{ravanbakhsh16boolean,rukat17bayesian}.
Second, when dropping the random graph assumption and
assuming worst-case inputs, biclustering problems are
NP-hard~\cite{orlin1977contentment} and require prohibitively high running
times~\cite{chandran16parameterized}.

In the streaming setting, the algorithm's input is a stream of the left-side
vertices $u\in U$, where each vertex arrives together with all of its incident
edges. We further assume that for some parameter $s$, each $u\in U$ has at most
$s$ incident edges and that $|V_i|\leq s$ for all $i$. Note that the stream only
contains left-side vertices $u\in U$ and does \emph{not} contain the vertices
$v\in V$. After single pass over the stream, the algorithm must return the
right-side clusters $\tilde{V_i}$. Then, the algorithm is allowed a second pass
over the stream to output the left-side clusters $\tilde{U_i}$.

Next, we state our theoretical guarantees.  We prove that after a single pass
over the left-side vertices of a bipartite graph, the planted right-side
clusters can be recovered if some conditions hold.  We write
$A\triangle B = (A \setminus B) \cup (B \setminus A)$ to denote the symmetric
difference.
\begin{theorem}
\label{thm:bmf:algo-toy}
	Let $G = (U\cup V,E)$ be a random bipartite graph with planted clusters
	$U_1,\dots,U_k$ and $V_1,\dots,V_k$ as above.
	Let $p \in [1/2, 0.99]$ and $s=\max_i \abs{V_i}$.
	There exist constants $K_1,K_2,K_3,K_4$ such that if

 	\smallskip
 	\begin{varwidth}[t]{.48\columnwidth}
 		\begin{itemize}
 			\item $q \leq K_1 p s / n$,
 			\item $\abs{V_i} \geq K_3 \lg n$ for all $i$,
 		\end{itemize}
 	\end{varwidth}
 	\hspace{4em}
 	\begin{varwidth}[t]{.52\columnwidth}
 		\begin{itemize}
 			\item $\abs{U_i} \geq K_2 \lg n$ for all $i$,
 			\item $\abs{V_i \triangle V_{i'}} \geq K_4 s$ for $i \neq i'$,
 		\end{itemize}
	\end{varwidth}\\
 	\smallskip
	then there exists an algorithm which
	returns clusters
	$\tilde{V_1},\dots,\tilde{V_k}$ such that
	with high probability
	$\{\tilde{V_1},\dots,\tilde{V_k}\} = \{V_1,\dots,V_k\}$.
	The algorithm uses $O(ks)$ space and has a running time of $O(mks)$.
\end{theorem}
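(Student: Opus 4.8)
The plan is to analyze the \AlgoToy routine by viewing each arriving left-vertex as a noisy measurement of its cluster's right-side signature. If $u \in U_i$, then by \eqref{eq:bmf:random-edges} its neighborhood $N(u)$ contains every $v \in V_i$ independently with probability $p$ and every $v \notin V_i$ independently with probability $q$, so $N(u)$ is a $p$-subsample of $V_i$ corrupted by external noise. The first step is a concentration lemma: using Chernoff bounds together with the hypotheses $\abs{V_i} \geq K_3 \lg n$ and $q \leq K_1 p s/n$, I would show that, simultaneously for all $u$ with high probability, the ``signal'' satisfies $\abs{N(u) \cap V_i} = (1\pm o(1))\,p\abs{V_i}$ while the \emph{total} noise $\abs{N(u) \setminus V_i}$ is $O(qn) = O(K_1 p s)$, so the degree stays $O(s)$. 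Crucially, I would also record the sharper fact that the noise from a single $u$ landing inside any one fixed cluster $V_{i'}$ is only $O(q\abs{V_{i'}}) = O(qs)$, which is $o(s)$ because $q$ is tiny; this is what makes cross-contamination between profiles negligible even though the total noise is comparable to the signal. The union bound over the $m$ vertices is what forces the logarithmic lower bounds on cluster sizes and fixes $K_3$.

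Second, I would prove that \AlgoToy assigns each $u$ to the correct cluster. Here the separation hypothesis $\abs{V_i \triangle V_{i'}} \geq K_4 s$ does the work: I would show that the similarity between $N(u)$ and the (running) profile $\tilde{V_i}$ of its own cluster exceeds the similarity to any other profile $\tilde{V_{i'}}$ by a margin of order $p\abs{V_i \triangle V_{i'}}$, which dominates the $O(qs)$ per-profile noise once $K_4$ is a sufficiently large constant. The point of taking $p \ge 1/2$ is that it keeps the ``missing-signal'' fluctuations on the correct side of this comparison, and a large $K_4$ forces the clusters to be nearly disjoint, ruling out the degenerate case of (nearly) nested signatures in which one set is contained in another and the test would otherwise be ambiguous. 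This argument is inductive over the stream: I would maintain the invariant that every profile built so far is within symmetric difference $o(s)$ of the true cluster it represents, show that this invariant makes the next classification correct, and show that a correct classification preserves it.

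Third, I would recover each $V_i$ exactly by thresholding frequencies. Within cluster~$i$, a right-vertex $v \in V_i$ lies in a $p$-fraction of the neighborhoods assigned to $i$, whereas any $v \notin V_i$ lies in only a $q$-fraction in expectation; since $\abs{U_i} \geq K_2 \lg n$, a Chernoff bound plus a union bound over all $n$ right-vertices shows that a threshold $\theta \in (q,p)$ cleanly separates the two populations with high probability, so that $\tilde{V_i} = V_i$ exactly. Because a cluster may encounter many distinct noise vertices over the stream, its profile cannot be stored explicitly; instead each profile is maintained as a heavy-hitters summary (\MG), which is guaranteed to retain the high-$\freq$ (about $p$) elements of $V_i$ while discarding the low-$\freq$ (about $q$) noise, in $O(s)$ space per cluster. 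Summing over the $k$ clusters gives the claimed $O(ks)$ space, and comparing each of the $m$ neighborhoods against $k$ profiles at cost $O(s)$ each gives the $O(mks)$ running time.

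The main obstacle is the second step, the on-line classification, because the profiles are themselves noisy running estimates rather than the true $V_i$: an early misclassification could corrupt a profile and cascade through the remainder of the stream. I expect the crux to be choosing $K_1,\dots,K_4$ so that the separation margin provably survives \emph{both} the neighborhood noise and the current profile error throughout the pass, and in particular verifying that the inductive invariant is strong enough to be self-sustaining under a worst-case arrival order. The thresholding recovery of the third step provides welcome slack here, since it only needs a \emph{majority} of each cluster's vertices to be classified correctly; part of the plan is therefore to decouple ``enough correct assignments for recovery'' from ``every assignment correct'' and to lean on the former wherever the latter is hard to guarantee.
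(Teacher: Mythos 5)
Your overall skeleton (concentration of neighborhoods, correct on-line clustering, frequency thresholding via a heavy-hitters sketch) matches the paper's lemma structure, and your first and third steps correspond closely to the paper's Lemma~\ref{lem:bmf:distances}, Lemma~\ref{lem:bmf:degree} and Lemma~\ref{lem:bmf:postprocessing}. The gap is in your second step, which you yourself flag as the crux but do not resolve. You analyze a different mechanism from the one \AlgoToy actually uses: you classify each arriving vertex by comparing its neighborhood against \emph{running profiles} $\tilde{V_i}$ that are updated as vertices get assigned, and you therefore need an inductive invariant (``every profile is within symmetric difference $o(s)$ of its true cluster'') that must survive a worst-case arrival order and possible cascading misclassification. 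Nothing in your proposal establishes that this invariant is self-sustaining; you only state that you expect suitable constants $K_1,\dots,K_4$ to make it so, and your fallback (``only a majority of assignments need be correct'') would in turn perturb the frequency analysis of your third step, a complication you also leave unaddressed.

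The paper avoids this difficulty entirely by a design feature your proposal overlooks: \AlgoToy never compares an arriving vertex against an aggregate profile. Each center is the \emph{raw vector} $x_c$ of the single vertex that opened it, and it is never modified; the \MG sketch attached to a center is used only in the final thresholding, never in assignment decisions. Consequently the clustering step needs no induction at all. One proves (Lemma~\ref{lem:bmf:distances}, by Chernoff bounds, which is where $p\in[1/2,0.99]$ and $\abs{V_i}\geq K_3\lg n$ are used) that with probability at least $1-m^{-1}$ \emph{all} pairwise Hamming distances concentrate simultaneously, so that within-cluster distances are below $1.01(1/2+2K_1)s$ while cross-cluster distances exceed $0.98K_4 s/2$; choosing $K_4 \geq \frac{2.02}{0.98}(1/2+2K_1)$ and the threshold $\alpha = 0.49 K_4 s$ between these two bounds, the greedy rule (open a center iff all current centers are at distance greater than $\alpha$) is then \emph{deterministically} correct on the conditioned event, whatever the arrival order: the first vertex of each $U_i$ becomes its unique center and every later vertex of $U_i$ is assigned to it (Lemma~\ref{lem:bmf:correct-clustering}). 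With exact clustering in hand, the thresholding step goes through essentially as you describe (with $\theta=0.75p$ and sketch error $\varepsilon\abs{X}\leq 0.1p\abs{U_i}$). To repair your proof you should either adopt this fixed-representative comparison, or actually carry out the inductive profile-stability argument --- but the latter is strictly harder and is not needed for the theorem.
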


Let us briefly discuss this result and for simplicity assume that the $V_i$ are
disjoint and have size $\abs{V_i} = s = \Omega(\lg n)$. Then the bounds for $p$ and $q$ essentially
require that $p > 1/2$, $q \approx p s / n$ and $\abs{U_i}=\Omega(\lg n)$.  While
this is much weaker than bounds derived for static algorithms for this type of
random graph model (e.g., \cite{neumann18bipartite, zhou19analysis}), the static
algorithms do not use sublinear space. Furthermore, the bounds on $p$ and $q$
are almost optimal when one wants to ensure that a greedy clustering of the
left-side vertices succeeds.\footnote{
	Roughly speaking, the condition on $q$ ensures that the left-side vertices have
	more ``signal edges'' than ``noise edges''. More concretely, in our setting
	with small right-side clusters $V_i$ of size $|V_i| \approx s \ll n$, we
	have that $n - s \approx n$. Thus, in expectation every vertex $u\in U_i$
	has $ps$ ``signal-edges'' to vertices from its corresponding right-side
	cluster $V_i$ and $q (n-s) \approx qn$ ``noise-edges'' to vertices in
	$V\setminus V_i$. Now, if $q \gg p s / n$, then $u$ has $qn \gg p s / n
	\cdot n = p s$ ``noise-edges'' and, hence, more ``noise edges'' than
	``signal-edges''. In such a case, the Hamming distances of vertices from the
	same cluster $U_i$ are essentially identical to the Hamming distances of
	vertices from different clusters $U_i$ and $U_j$, $i\neq j$.  Therefore,
	clustering the vertices in $U$ based on their Hamming distance cannot
	succeed anymore and, hence, the analysis of our algorithm is tight w.r.t.\
	the choice of $q$.
}

We also show that \emph{any} algorithm recovering the planted right-side
clusters must use space $\Omega(ks)$. Thus, the space usage of the algorithm
from the theorem is optimal.  We prove the theorem and the proposition in
Section~\ref{sec:bmf:theory}.
\begin{proposition}
\label{prop:bmf:space}
	Any algorithm solving the above biclustering problem
	requires at least $\Omega(ks)$ space.
\end{proposition}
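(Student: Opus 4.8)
The plan is to prove an encoding (information-theoretic) lower bound: I will exhibit a large family $\mathcal{F}$ of instances that a correct algorithm must tell apart, and then argue that the algorithm's memory at the end of its single pass must be large enough to encode which instance it saw.

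First I would construct the hard family. Partition $V$ into $k$ disjoint blocks $B_1,\dots,B_k$, each of size $2s$, and for each $i$ let the planted right-side cluster $V_i$ be an arbitrary $s$-element subset of $B_i$. Because the blocks are disjoint, any two clusters satisfy $\abs{V_i \triangle V_{i'}} = 2s$, so the separation condition is met, and because each $V_i$ is confined to its own block the unordered collection $\{V_1,\dots,V_k\}$ determines every individual $V_i$. Hence the number of distinct instances produced this way is $\binom{2s}{s}^{k} \ge 2^{\Omega(ks)}$, using $\binom{2s}{s} = 2^{\Omega(s)}$. I would fix the left vertex set $U = U_1 \cup \dots \cup U_k$ and its partition (with each $\abs{U_i}$ large enough that recovery is feasible) identically across all instances, and take the noiseless parameters $p=1$, $q=0$, so that every $u \in U_i$ is joined to exactly the vertices of $V_i$; the resulting graph is a valid (degenerate) draw from the random model whose unique correct output is $\{V_1,\dots,V_k\}$.

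Next I would convert correctness into a counting bound. At the end of its pass, a deterministic algorithm's output depends only on its final memory state, since no further input arrives. If two instances of $\mathcal{F}$ led to the same final memory state, the algorithm would return the same clusters for both, contradicting correctness, since the instances in $\mathcal{F}$ have pairwise distinct correct outputs. Therefore the number of reachable memory states is at least $\abs{\mathcal{F}} \ge 2^{\Omega(ks)}$, and the working space is at least $\log_2 \abs{\mathcal{F}} = \Omega(ks)$ bits, which is the claimed bound.

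Finally I would remove the idealizations. To handle randomized algorithms that only succeed with high probability, I would place the uniform distribution over the $2^{\Omega(ks)}$ configurations and apply Fano's inequality (equivalently, Yao's principle): the planted configuration carries $\Omega(ks)$ bits of entropy, and a w.h.p.-correct algorithm recovers it from its output, hence from its final memory state, with small error, so the memory must retain $\Omega(ks)$ bits of information about the configuration; since the space upper bounds $\log_2$ of the number of states, the lower bound follows. The same device lets me replace the noiseless edges by any admissible $p \in [1/2, 0.99]$ and small $q$, provided the clusters remain recoverable with high probability. I expect the main obstacle to be precisely this last step, namely keeping the entropy bookkeeping clean in the presence of both the edge randomness and the algorithm's internal coins; the deterministic noiseless core is a direct counting argument.
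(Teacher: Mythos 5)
Your overall strategy---a distinguishing-instances counting argument, with Fano's inequality to handle randomization---is the same information-theoretic approach the paper takes (and your treatment of the deterministic core and of the w.h.p.\ issue is actually more careful than the paper's rather terse proof). However, there is a genuine quantitative gap, and it comes from your choice of hard family. By confining each $V_i$ to a fixed block $B_i$ of size $2s$, you cap the entropy of each cluster at $\lg\binom{2s}{s} = O(s)$ bits, so your family has size $2^{\Theta(ks)}$ and your argument yields a lower bound of $\Omega(ks)$ \emph{bits}. The paper instead lets the clusters be arbitrary disjoint $s$-subsets spread over all of $V$, giving $\binom{n}{ks}$ configurations and hence $\lg\binom{n}{ks} = \Omega(ks\lg n)$ bits (valid when $ks \leq n^{1-\delta}$, which is the regime of interest since the right clusters are small). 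This $\lg n$ factor is not cosmetic: the proposition's $\Omega(ks)$ must be read in machine words of the word RAM model---that is how the paper's proof concludes, and it is the only reading under which the bound matches the $O(ks)$-word upper bound of Theorem~\ref{thm:bmf:algo-toy}, whose algorithm stores $\Theta(ks)$ indices and counters, each of $\Theta(\lg n)$ bits. Measured in words, your bound is only $\Omega(ks/\lg n)$, a $\lg n$ factor short of the claim.

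The fix is small and preserves all of your machinery: drop the blocks and let the instances range over all ways to choose $k$ mutually disjoint $s$-subsets of $V$ (the unordered collection still determines the instance, and your $p=1$, $q=0$ construction and the state-counting argument go through verbatim). The per-cluster entropy then becomes $\Theta(s\lg(n/(ks)))=\Theta(s\lg n)$ bits, the family has size $n^{\Omega(ks)}$, and the final step is the paper's conversion: $\Omega(ks\lg n)$ bits equals $\Omega(ks)$ words.
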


\subsection{Boolean Matrix Factorization (BMF)}
\label{sec:bmf:preliminaries:bmf}
In the Boolean Matrix Factorization (BMF) problem, the input is a matrix
$B\in\{0,1\}^{m\times n}$ and the task is to find factor matrices
$L\in\{0,1\}^{m\times k}$ and $R\in\{0,1\}^{k\times n}$ such that
$\norm{B - L \circ R}$ is minimized. Here, $\circ$ denotes matrix
multiplication under the Boolean algebra, i.e., for all $i=1,\dots,m$ and
$j=1,\dots,n$,
\[(L \circ R)_{ij} = \bigvee_{r=1}^k (L_{ir} \land R_{rj})\; .\]

In the streaming setting, the algorithm's input is a stream
consisting of the rows $B_i$ of $B$, where we assume that each row $B_i$ has at
most $s$ non-zero entries. After a single pass over the stream, the
algorithm must output the right factor matrix $R$. Then, the algorithm is
allowed a second pass over the stream to compute the left factor matrix $L$.

While the biclustering problem and the BMF problem might appear quite different
at first glance, they are tightly connected. Indeed, there is a one-to-one
correspondence between bipartite graphs $G = (U\cup V, E)$ with
$U=\{u_1,\dots,u_m\}$ and $V=\{v_1,\dots,v_n\}$ and Boolean matrices
$B\in\{0,1\}^{m\times n}$: The rows of $B$ correspond to the vertices $u_i\in U$ and
the columns of $B$ correspond to the vertices $v_j\in V$; now one sets $B_{ij} = 1$
iff $(u_i,v_j)\in E$. This is yields a bijective mapping between bipartite
graphs and Boolean matrices; $B$ is often called the \emph{biadjacency matrix}
of $G$.

Furthermore, there is a correspondence of clusterings
$U_1,\dots,U_k\subseteq U$ and $V_1,\dots,V_k\subseteq V$ and the factor
matrices $L$ and $R$: The clusters $U_i$ correspond to the columns
of $L$ and the clusters $V_j$ correspond to the rows of $R$. More precisely,
consider the $r$'th column of $L$ and set it to the indicator vector of $U_k$,
i.e., we set $L_{ir}=1$ iff $u_i \in U_r$.  Simililary, we set $R_{rj}=1$ iff
$v_j\in V_r$.

There are two main differences between the problems. First, while in
biclustering we try to recover a set of planted ground-truth clusters, in BMF we
try to optimize an objective function. However, when $p>1/2>q$, a ``good''
biclustering solution will also provide a good BMF solution and vice versa.
Second, in biclustering each vertex $u\in U$ belongs to exactly one cluster
$U_i$ (since the $U_i$ partition $U$). This would correspond to the constraint
in BMF that each column of the factor matrix $L$ must contain exactly one
non-zero entry. However, in BMF we do not make this assumption and allow each
column of $L$ to contain arbitrarily many non-zero entries. Thus, in BMF the
vertices $u\in U$ are allowed to be member of multiple clusters
$U_{i_1},\dots,U_{i_t}$ (and the clusters $U_i$ do not have to be mutually
disjoint). To address these differences, in Section~\ref{sec:bmf:algorithm-left} we
use different algorithms for computing the left-side clusters $U_i$ for
biclustering and for BMF.

\subsection{Mergeable Heavy Hitters Data Structures}
\label{sec:bmf:preliminaries:heavy-hitters}
Next, we recap mergeable heavy hitters data structures, which we will use
as subroutines in our algorithms.

Let $X = (e_1,\dots,e_N)$ be a stream of elements from a discrete domain $A$.
The \emph{frequency} $f_a$ of an element $a \in A$ is its number of occurrences
in the stream, i.e., $f_a = \abs{\{i : e_i = a \}}$. In the \emph{heavy hitters}
problem the task is to output all
elements with $f_a \geq \varepsilon N$ and none with $f_a < \varepsilon N/2$
after a single pass over the stream for $\varepsilon>0$.

Misra and Gries~\cite{misra82finding} provided a data structure which solves the
heavy hitters problem using $O(1/\varepsilon)$ space. In fact, their data
structure can approximate the frequency of each element $a \in A$ with additive
error at most $\varepsilon N/2$. For the rest of the paper, we will denote
Misra--Griess data structures by \MG.

Agarwal et al.~\cite{agarwal13mergeable} showed that Misra--Gries data
structures are \emph{mergeable}: Let $\MG_1$ and $\MG_2$ be two Misra--Gries
data structures which were constructed on two different streams $X_1$ and $X_2$.
Then there exists a merge algorithm which on input $\MG_1$ and $\MG_2$
constructs a new data structure, that satisfies the same guarantees as a
Misra--Gries data structure which was built on the concatenated stream $X_1 \cup
X_2$. We write $\MG_1 \cup \MG_2$ to denote such a merged data structure.

\emph{Remark.} While we use the mergeable version of the Misra--Gries data
structure, we could as well other mergeable heavy hitters data structures such
as the count-min sketch~\cite{cormode05improved}.  See~\cite{agarwal13mergeable}
for more details on mergeable data structures.

\section{First Pass: Recover Right Clusters}
\label{sec:bmf:algorithm-right}

We describe two algorithms for computing the right clusters~$\tilde{V_j}$. As described
in Section~\ref{sec:bmf:biclustering}, we assume that the algorithms obtain as input
a stream $U = (u_1,\dots,u_m)$ consisting of vertices from the left side of the
graph, where each $u_i$ arrives together with all of its at most $s$ edges to
vertices on the right side of the graph.  After a single pass over $U$, the
algorithm must return clusters $\tilde{V_1},\dots,\tilde{V_k}$ on the right side
of the graph.

It will be convenient to identify the vertices $u \in U$ with
bit-vectors $x_u \in \{0,1\}^n$, where we set $x_u(j) = 1$ iff $(u,v_j)\in E$,
i.e., $x_u(j) = 1$ iff vertex $u$ is a neighbor of $v_j \in V$.  For two
vertices $u, u' \in U$, we let $d(x_u,x_{u'})= |\{ j : x_u(j) \neq x_{u'}(j) \}|$
denote the Hamming distance
of $x_u$ and $x_{u'}$, i.e., 
$d(x_u,x_{u'})$ measures the number of vertices in
$V$ which are incident upon $u$ or $u'$ but not both.

We will first describe a simplified greedy algorithm to
highlight our main ideas; this is the algorithm mentioned in
Theorem~\ref{thm:bmf:algo-toy}. Then we provide a second, more practical, algorithm in
Section~\ref{sec:bmf:algorithm-right:importance}; we implement and evaluate this
algorithm in Sections~\ref{sec:bmf:implementation} and~\ref{sec:bmf:experiments}.

\subsection{Warm Up: Greedy Biclustering}
\label{sec:bmf:algorithm-right:greedy}

We start by discussing a simplified greedy algorithm to explain the main idea of
our approach.  This greedy algorithm has the guarantees stated in
Theorem~\ref{thm:bmf:algo-toy}.

Before describing the algorithm, let us first make two observations about the
properties of the random graph model in
Section~\ref{sec:bmf:preliminaries:biclustering}:
(1)~Suppose we know a planted left-side cluster $U_i$ and we want to recover its
corresponding right-side cluster $V_i$. Then observe that by
Equation~\ref{eq:bmf:random-edges} every vertex $v \in V_i$ has $p |U_i|$ neighbors
in $U_i$ and every vertex $v \not\in V_i$ has $q |U_i|$ neighbors in $U_i$.
Thus, if $U_i$ is large enough and $p$ is sufficiently larger than $q$, we can
find a threshold $\theta$ such that  with high probability all $v\in V_i$ have
more than $\theta |U_i|$ neighbors in $U_i$ and all $v\not\in V_i$ have less
than $\theta |U_i|$ neighbors in $U_i$. Hence, recovering the cluster $V_i$
essentially boils down to identifying those vertices in $V_i$ which are
frequently neighbors of vertices in $U_i$. In other words, we want to find the
heavy hitters among the neighbors of vertices in $U_i$.
(2)~The second insight is that when processing the stream, the vertices
$u,u'\in U_i$ from the same cluster will have similar neighborhoods in $V$ and,
hence, $d(x_u,x_{u'})$ is small. More concretely, assume that
$d(x_u,x_{u'})<\alpha$ for some suitable parameter $\alpha$. On the other hand,
if $u\in U_i$ and $u''\in U_j$ with $i \neq j$, their neighborhoods will be
quite different and $d(x_u,x_{u''}) > \alpha$ is large. Thus, a greedy
clustering of the vertices $u\in U$ based on the distances of their
corresponding vectors $x_u$ suffices to recover the $U_i$. In
Section~\ref{sec:bmf:theory}, we show how $\theta$ and $\alpha$ can be picked under
the conditions from Theorem~\ref{thm:bmf:algo-toy}.

Roughly speaking, the algorithm works as follows. It assumes that it obtains
parameters $\theta$ and $\alpha$ with the above properties as input.  Now
the algorithm greedily forms clusters of all left-side vertices which have
distance at most $\alpha$; this corresponds to Observation~(2) above.  To save
memory, the algorithm only stores \emph{a single vertex} for each cluster.
Furthermore, for each cluster consisting of left-side vertices, the algorithm
keeps track how many of its edges are incident upon each right-side vertex
$v\in V$.  Since we do not have enough memory to store a counter for each
vertex $v\in V$, the algorithm uses the
mergeable heavy hitters data structure from
Section~\ref{sec:bmf:preliminaries:heavy-hitters} to
approximately keep track of how many times each right-side vertex appeared; this
corresponds to Observation~(1) above.

Now we describe the algorithm more formally and present its pseudocode in
Algorithm~\ref{algo:bmf:algo-toy}.  The algorithm obtains as input $U$, a distance
parameter~$\alpha$ and a rounding threshold~$\theta$. It maintains a
set of \emph{centers} $C$ which is initially empty. For each center $c\in C$,
the algorithm stores a heavy hitters data structure $\MG(c)$ with $O(s)$
counters
and a counter $n_c$ denoting
how many vertices have been assigned to $c$.

Now the algorithm processes the vertices $u \in U$ as follows.  First, it checks
whether $x_u$ has Hamming distance more than $\alpha$ from all centers $c \in
C$. If this is the case, the algorithm opens $u$ as a new center. That is, it
sets $C \leftarrow C \cup \{u\}$ and sets $n_u \leftarrow 1$. Else, there exists
a center $c(u) \in C$ with $d(x_u, x_{c(u)}) \leq \alpha$ and the algorithm
\emph{assigns} $u$ to $c(u)$. When assigning $u$ to $c(u)$,
the algorithm first creates a heavy hitters data structure $\MG(u)$ containing
all $j$ such that $(u,v_j) \in E$ (note that the algorithm has access to this
information since $u$ arrives together with all of its incident edges). Then it
merges $\MG(c(u))$ and $\MG(u)$ and updates $\MG(c(u))$ to this merged heavy
hitters data structure.  Furthermore, the algorithm increases the counter
$n_{c(u)}$ by $1$. Then it proceeds with the next point from the stream.

When the algorithm finished processing the stream, it performs a postprocessing
step. It iterates over all centers $c\in C$ and sets $\tilde V_c$ to all
vertices $v_j \in V$ such that the counter of $j$ in $\MG(C)$ is at least
$\theta n_c$, where $\theta$ is the rounding threshold from the input and $n_c$
is the number of vertices that were assigned to $c$. Then the algorithm outputs
the clusters $\tilde{V_c}$ as its solution.

\begin{algorithm}[tb]
  \begin{minipage}{\columnwidth}
    \small
  \caption{\AlgoToy($U$, $\alpha$,
		  $\theta$)}\label{algo:bmf:algo-toy}\label{algo:bmf:greedy}
\begin{algorithmic}[1]
	\StateN{$C \leftarrow \emptyset$}
	\For{$u \leftarrow$ next vertex from stream}
		\StateN{$d \leftarrow \min_{c\in C} d(x_u,x_c)$}
		\If{$d > \alpha$} \Comment{open $u$ as center}
			\State{$C \leftarrow C \cup \{u\}$}
			\State{$n_u \leftarrow 1$}
		\Else \Comment{Assign $u$ to its closest center $c(u)$}
			\StateN{$c(u) \leftarrow \argmin_{c\in C} d(x_u,x_c)$}
			\StateN{$\MG(c(u)) \leftarrow \MG(c(u)) \cup \MG(u)$}
			\State{$n_{c(u)} \leftarrow n_{c(u)} + 1$}
		\EndIf
	\EndFor

	\ForAll{$c \in C$} \Comment{Postprocessing}
		\StateN{$\tilde{V_c} \leftarrow \{v_j \in V : $ the counter of $j$ in
			$\MG(c)$ is at least $\theta n_c \}$}
	\EndFor
\end{algorithmic}
\end{minipage}
\end{algorithm}

\emph{Remark.} Note that Algorithm~\ref{algo:bmf:greedy} only delivers good results
when the parameters $\alpha$ and $\theta$ provide exactly those guarantees which
we discussed at the beginning of the subsection. In Section~\ref{sec:bmf:theory} we
show how $\alpha$ and $\theta$ can be set when the parameters $p$, $q$ and $k$
are known for random graph models as introduced in
Section~\ref{sec:bmf:preliminaries:biclustering}; under this assumption we show that the
algorithm indeed returns the planted clusters $V_1,\dots,V_k$ after a single
pass over the stream and using essentially optimal space. However, in practice
it is unrealistic that one has knowledge about these parameters.
Especially setting the parameter $\alpha$ seems troublesome; for example, when
setting $\alpha$ incorrectly, one cannot even guarantee to obtain $k$ clusters
in total. We show how to resolve this issue in the next subsection.

\subsection{Biclustering Using Importance Sampling}
\label{sec:bmf:algorithm-right:importance}
We introduce the \sofa algorithm which constitutes our main contribution;
\sofa is short for \emph{Streaming bOolean FactorizAtion}. \sofa performs
a single pass over the vertices $u\in U$ and afterwards returns clusters
$\tilde{V_1},\dots,\tilde{V_k}$. One can view \sofa as the more practical
version of Algorithm~\ref{algo:bmf:greedy}, since it
it does not require the parameter $\alpha$ which is not available in practice.
In a nutshell, we will replace the greedy
clustering from Algorithm~\ref{algo:bmf:algo-toy} by the streaming $k$-Medians
algorithm from Braverman et al.~\cite{braverman11streaming} which is based on
importance sampling. The pseudocode of \sofa with all details is presented in
Algorithm~\ref{algo:bmf:algo-name}.

Roughly speaking, \sofa works as follows. \sofa maintains a set of
\emph{centers} $C$ which is initially empty; we impose that $C$ is never allowed
to contain more than $c_{\max}$ vertices, where $c_{\max}$ is a user-defined
parameter. As before, for each center $c \in C$, the algorithm maintains a heavy
hitters data structure $\MG(c)$.  When \sofa processes the vertices from the
stream and a new vertex $u$ arrives, \sofa computes the distance
$d=d(x_u,x_{c(u)})$ from $u$ to the closest center $c(u)$ in $C$.  It then opens
$u$ as new center with probability proportional to $d$; if $u$ is not opened as
a center, \sofa assigns $u$ to $c(u)$. Thus, if $u$ is ``close'' to $c(u)$ then
$u$ is unlikely to become a new center and more likely to be assigned to $c(u)$;
on the other hand, if $u$ is ``far away'' from $c(u)$ (and, hence, all centers),
then $u$ is likely to become a new center. As before, when a vertex $u$ is
assigned to $c(u)$, the indices of all neighbors of $u$ are added to
$\MG(c(u))$. Next, suppose that after opening a new center, the set $C$ contains
$c_{\max}$ centers. Then \sofa restarts on the stream which only consists of the
$c_{\max}$ centers in $C$ and all unprocessed vertices of the stream. When \sofa
restarts on the centers of $C$ and one of the previous centers $c_i$ is assigned
to another previous center $c_j$, then \sofa merges their corresponding heavy
hitters data structures $\MG(c_i)$ and $\MG(c_j)$ as described in
Section~\ref{sec:bmf:heavy-hitters}.  Finally, after processing all vertices from
the stream and obtaining a set of centers $C$ together with their heavy hitters
data structures, we run a postprocessing step. At this point $C$ can
contain more than $k$ centers (but at most $c_{\max}$). We run a static
$k$-Medians algorithm on the vectors $x_c$ for $c \in C$ to obtain a clustering
of $C$ into subsets $C_1,\dots,C_k$.  For each $C_i$, we merge the heavy hitters
data structures of the centers in $C_i$ and denote this merged data structure as
$\MG_i$.  As before, we set $\tilde{V_i}$ to all vertices $v_j \in V$
which have a counter of value at least $\theta |C_i|$ in $\MG_i$.

We now elaborate on the details of \sofa.  At the beginning, \sofa initializes a
lower bound $\LB$ on the $k$-Medians clustering cost of the points $x_u$ in the
stream to~$1$. It also maintains an approximation of the current cost of the
clustering which we denote $\cost$ and initialize to $0$. After that, \sofa
starts processing the vertices from the stream.  We
maintain a set of centers $C$ for which we ensure that $|C|<c_{\max}$ at all
times. For each center we store a heavy hitters data structure from
Section~\ref{sec:bmf:preliminaries:heavy-hitters} with $O(s)$ counters.

When starting to process the vertices from the stream, \sofa computes a weight
$f \leftarrow LB/(k(1+\lg n))$.  As long as there are unread vertices in the
stream, $|C| < c_{\max}$ and $\cost < 2\LB$, \sofa proceeds as follows. It reads
the next vertex $u$ from the stream and sets $d$ to the distance
$d(x_u,x_{c(u)})$ of $u$ to its closest center $c(u)$. Now it opens $u$ as a new
center with probability $\min\{w(u) \cdot d/f, 1\}$, where $w(u)$ is the weight
of $u$. \sofa maintains as invariant that if $u$ was a previously unprocessed
vertex from the stream, then $w(u) = 1$, and, if $u$ was a center before, then
$w(u)$ is the number of vertices which were previously assigned to $u$. If $u$
is opened as a new center, we set $C \leftarrow C \cup \{u\}$.  If $u$ is
assigned to its closest center $c(u)$, then we increase $\cost$ by $w(u) \cdot
d$, increase the weight of $c(u)$ by $w(u)$ and set $\MG(c(u))$ to the merged
heavy hitters data structures of $\MG(c(u))$ and $\MG(u)$.

If at some point $|C| = c_{\max}$ or $\cost >2\LB$, then \sofa doubles $\LB$.
Furthermore, \sofa restarts on the stream which consists of
the $c_{\max}$ vertices of $C$ and all unprocessed vertices from $U$ (in this
order).  Note that the vertices $c \in C$ still have their previously assigned
weights $w(c)$, whereas the vertices in the unprocessed part of $U$ all have
weight $1$.

After \sofa finished processing all vertices from the stream, we perform
a postprocessing step. We start by running a static $O(1)$-approximate
$k$-Medians algorithm on the points $x_c$ for $c \in C$ which uses only
$O(|C| \cdot s)$ space and which runs in time $\poly(|C| \cdot s)$;
this can be done, for example, using the local search algorithm by Arya et
al.~\cite{arya04local}.  This provides us with a clustering of $C$
into disjoint subsets $C_1,\dots,C_k$.  Now for each $i=1,\dots,k$, we set
$\MG_i$ to the merged heavy hitters data structure of all vertices in $C_i$ and
$|C_i|$ to the sum of the weights of all vertices in $C_i$.  Finally, we set
$\tilde{V_i}$ to all vertices $v_j \in V$ such that the counter of $j$ in
$\MG_i$ is at least $\theta |C_i|$.

\begin{algorithm}[tb]
\begin{minipage}{\columnwidth}
  \caption{\AlgoName($U$, $k$, $c_{\max}$,
		  $\theta$)}\label{algo:bmf:importance}\label{algo:bmf:algo-name}
  \small
\begin{algorithmic}[1]
	\StateN{$\LB \leftarrow 1$, $\cost \leftarrow 0$ \Comment{Process the vertices from the stream}}
	\While{there exist unread vertices in $U$}
		\StateN{$C \leftarrow \emptyset$}
		\StateN{$f \leftarrow \LB / (k(1+\lg n))$}
		\For{$u \leftarrow$ next vertex from stream}
			\StateN{$d \leftarrow \min_{c\in C} d(x_u,x_c)$}\label{line:bmf:closest-center}
			\StateN{openCenter $\leftarrow$ True, with
				probability $\min\{w(x) \cdot d/f, 1\}$, and False, otherwise}
			\If{openCenter = True} \Comment{open $u$ as center}
				\State{$C \leftarrow C \cup \{u\}$}
				\State{$w(u) \leftarrow 1$}
			\Else \Comment{Assign $u$ to its closest center $c(u)$}
				\StateN{$\cost \leftarrow \cost + w(u) \cdot d$}
				\StateN{$c(u) \leftarrow \argmin_{c\in C} d(x_u,x_{c(u)})$}
				\StateN{$w(c(u)) \leftarrow w(c(u)) + w(u)$}
				\StateN{$\MG(c(u)) \leftarrow \MG(c(u)) \cup \MG(u)$}
			\EndIf
			\If{$|C| = c_{\max}$ or $\cost>2\LB$}
				\StateN{break and raise flag}
			\EndIf
		\EndFor
		\If{flag raised}
			\StateN{$U \leftarrow $ the stream consisting of the (weighted)
				vertices in $C$ and all unread vertices of $U$}
			\StateN{$\LB \leftarrow 2\LB$}
		\EndIf
	\EndWhile
	\StateN{$(C_1,\dots,C_k) \leftarrow$ clustering of $C$ using an
		$O(1)$-approximate $k$-Medians algorithm
		\Comment{Postprocessing}}\label{line:bmf:offline-clustering}
	\ForAll{$i = 1,\dots,k$}
		\StateN{$\MG_i \leftarrow \bigcup_{x \in C_i} \MG(x)$}\label{line:bmf:sofa-heavy-hitters}
		\StateN{$|C_i| \leftarrow \sum_{c \in C_i} w(c_i)$}
		\StateN{$\tilde{V_i} \leftarrow \{v \in V : $ the counter of $v$ in $\MG_i$ is at
			least $\theta |C_i| \}$}\label{line:bmf:sofa-thresholding}
	\EndFor
\end{algorithmic}
\end{minipage}
\end{algorithm}

\emph{Space Usage and Running Time.} We briefly argue that \sofa's space usage
is $O(k s \lg m)$ and its running time is bounded by $O(m k s \lg m)$. Observe
that the main space usage comes from storing the set of centers $C$ together
with a heavy hitters data structure for each center. Recall that we ensure that
$|C|\leq c_{\max}$ at all times. Furthermore, each center has $O(s)$ incident
edges (by assumption on our input stream) and we set the number of counters for
each heavy hitters data structure to $O(s)$. Thus, the total space usage is
$O(c_{\max} s)$.  Based on the analysis in~\cite{braverman11streaming}, we set
$c_{\max} = O(k \lg m)$ to obtain a constant factor approximation for
$k$-Median and this gives a total space usage of $O(k s \lg m)$.
This also gives us that clustering the vertices requires a
running time of $O(m k s \lg m)$, where we use that merging the heavy
hitters data structures can be done in constant amortized time.
In Section~\ref{sec:bmf:analysis-importance} we sketch how one can obtain
provable guarantees for \sofa.

\emph{Remark.} We use the streaming
$k$-Medians clustering algorithm from~\cite{braverman11streaming}, because the
centers it maintains are points from the stream. Thus, if these points
are sparse, the space usage of
\sofa for storing centers directly benefits from this.  Algorithms for streaming
$k$-Means (e.g.,~\cite{shindler2011fast}) often include steps, which
cause the centers to become dense.
Thus, if we
used such an algorithm as a subroutine, \sofa would require more space.
Here, however, we focused on setting close to the information-theoretically
minimum space usage
and, hence, we decided to use the algorithm by~\cite{braverman11streaming}.

\section{Second Pass: Recover Left Clusters}
\label{sec:bmf:algorithm-left}

In this section, we present algorithms for computing a clustering
$\tilde{U_1},\dots,\tilde{U_k}\subseteq U$ of the left side of the graph during
a second pass over the stream $U$.  We assume that our algorithms obtain as
input a set of clusters $\tilde{V_1},\dots,\tilde{V_k} \subseteq V$ from the
right side of the graph.
We will present two different algorithms for biclustering and BMF, respectively.

\subsection{Biclustering}
\label{sec:bmf:algorithm-left:biclustering}
We now present an algorithm which performs a single pass over the stream $U$ and
assigns each $u\in U$ to exactly one cluster $\tilde{U_i}$. We will use this
algorithm for the biclustering problem, where each vertex $u\in U$ belongs to a
unique planted cluster $U_i$ (see Section~\ref{sec:bmf:preliminaries:biclustering}).

To obtain the clustering $\tilde{U_1},\dots,\tilde{U_k}$, the algorithm
initially sets $\tilde{U_i} = \emptyset$ for all $i = 1,\dots,k$. Now the
algorithm performs a single pass over the stream of left-side vertices $u\in U$.
For each $u$, let $\Gamma(u)$ denote the set of neighbors of $u$ in $V$, i.e.,
$\Gamma(u) = \{ v\in V : (u,v) \in E \} \subseteq V$. Now the algorithm assigns
$u$ to the cluster $\widetilde{U_{i^*}}$ such that the overlap of $\Gamma(u)$
and $\widetilde{V_{i^*}}$ is maximized relative to the size of
$\widetilde{V_{i^*}}$. More concretely, the algorithm computes
\begin{align}
\label{eq:bmf:intersection}
	i^* = \argmax \{\abs{\Gamma(u) \cap \tilde{V_i}}/\abs{\tilde{V_i}} : i=1,\dots,k\}
\end{align} 
and then assigns $u$ to $\widetilde{U_{i^*}}$.

\emph{Space Usage and Running Time.}
Observe that the algorithm uses space $O(m)$ (where $m = |U|$), since for each
vertex $u\in U$, we need to store to which cluster $U_i$ it was assigned.
Furthermore, the running time of the algorithm is $O(mks)$: For each of the $m$
vertices, we need to compute $i^*$ as per Equation~\eqref{eq:bmf:intersection}. Since
we assume that each vertex $u$ has at most $O(s)$ neighbors and that all
$\tilde{V_i}$ have size $O(s)$, it takes time $O(s)$ to compute
$|\Gamma(u) \cap \tilde{V_i}|/|\tilde{V_i}|$ for fixed $i$. Thus, computing
$i^*$ can be done in time $O(ks)$.

\subsection{BMF}
\label{sec:bmf:algorithm-left:bmf}
Next, we present an algorithm, which performs a single pass over the stream and
computes clusters $\tilde{U_1},\dots,\tilde{U_k}$, where every vertex $u\in U$
may be contained in multiple clusters $U_{i_1},\dots,U_{i_T}$. Recall from
Section~\ref{sec:bmf:preliminaries:bmf} that this corresponds to computing a factor
matrix $L$ for the the BMF problem.

Our approach for computing the sets $\tilde{U_i}$ is similar to the greedy
covering scheme used in~\cite{miettinen08discrete}. The main idea is that for
every $u\in U$, we greedily cover the set $\Gamma(u)\subseteq V$ using the
clusters $\tilde{V_1},\dots,\tilde{V_k}$ similar to the classic set cover
problem.  However, unlike in standard set cover, we do allow for some amount of
``overcovering''.
Note that this greedily minimizes the symmetric difference of $\Gamma(u)$ and
the sets $\tilde{V_i}$ used for covering $\Gamma(u)$; thus, also their Hamming
distance is minimized.

Before we present our algorithm, let us first define our score function for the
covering process. For sets $A,X,Y$, we define the \emph{score of $A$ for
covering $X$ given that $Y$ was already covered} as
\(
	\score(A \mid X, Y)
	= \abs{ (X \setminus Y) \cap A }
		- \abs{A \setminus (X \cup Y) }.
\)

To better understand the score function, consider the case that no elements of
$X$ were covered before, i.e., $Y = \emptyset$.
Then $\score(A\mid X,\emptyset) = | X \cap A | - |A \setminus X|$ is the number
of elements in $X$, which get covered by $A$, minus the number of those elements
in $A$, which do not appear in $X$ (these elements ``overcover'' $X$). Now
suppose that $Y\neq\emptyset$, i.e., some elements of $X$ were already covered
before and these elements are stored in the set $Y$. Then the score function
takes this into account by not adding score for elements in $A\cap X \cap Y$
that are in $A$ and $X$, but were already covered before. Also, the score
function does not subtract score for elements in $A$ that are not in $X$, but
which were already overcovered before (and, hence, are in $Y$); more precisely,
it does not subtract score for the elements in $(A \cap Y) \setminus X$.

We now describe our greedy algorithm for computing the clusters $\tilde{U_i}$.
Initially, we set $\tilde{U_i} = \emptyset$ for all $i$.  Now we
perform a single pass over the stream $U$ and for each $u\in U$, we do
the following. We initialize $Y_u = \emptyset$ and, as before, let
$\Gamma(u)$ denote the set of neighbors of $u$ in $V$. Now, while there exists an
$i$ such that $\score(\tilde{V_i} \mid \Gamma(u),Y_u) > 0$, we compute
\begin{align}
\label{eq:bmf:max-score}
	i^* = \argmax_{i=1,\dots,k} \score(\tilde{V_i} \mid \Gamma(u), Y_u).
\end{align}
If $\score(\widetilde{V_{i^*}} \mid \Gamma(u), Y_u) > 0$, we assign $u$ to
$\widetilde{U_{i^*}}$ and we set $Y_u = Y_u \cup \widetilde{V_{i^*}}$.
Otherwise, we stop covering $u$ and proceed
with the next vertex from the stream.

\emph{Space Usage and Running Time.}
The space usage is $O(km)$ since each vertex can be assigned to as many as $k$
clusters. The running time of the algorithm is $O(m k^2 s)$: First, note that
evaluating $\score(\tilde{V_i} \mid \Gamma(u), Y_u)$ takes time $O(s)$ because
all sets have size $O(s)$. Second, for a single iteration of the
while-loop we need to evaluate the score function $O(k)$ times to obtain $i^*$
and there are at most $k$ iterations. Hence, we need to spend time $O(k^2 s)$
for each of the $m$ vertices in $U$.

\section{Implementation}
\label{sec:bmf:implementation}

We implemented the \sofa algorithm from
Section~\ref{sec:bmf:algorithm-right:importance} for recovering the right-side
clusters and the two algorithms from Section~\ref{sec:bmf:algorithm-left} for
recovering the left-side clusters. In this section, we present certain
adjustments that we made to improve the results of the algorithms and we discuss
how to set certain parameters of the algorithms.

We implemented all algorithms in Python. To speed up the computation, the
subroutines for finding the closest centers (Line~\ref{line:bmf:closest-center} in
Algorithm~\ref{algo:bmf:importance}) and for finding the clusters with maximum score
(Equation~\eqref{eq:bmf:max-score}) were implemented in CPython. We did not use any
parallelization, i.e., our implementations are purely single-threaded.
Our code is available
online\footnote{\small\url{\codeurl}}.

\subsection{Asymmetric Weighted Hamming Distance}
\label{sec:bmf:asymmetric}
During preliminary tests of \AlgoName on real-world data, we realized that
\AlgoName picked extremely sparse centers which often only had a single non-zero
entry. This resulted in almost all vertices being assigned to this
particular center (because the Hamming distance of a vertex $u$ to a center with
a single non-zero entry is the degree of $u$ plus/minus $1$ and, due to the low
degrees of the left-side vertices $u$, these distances are usually small) which
made the cluster recovery fail.

Hence, we needed to find a way to promote denser centers. To this end, we
introduce an asymmetric weighted version of the Hamming distance which we define
as follows. Let $c \in C$ be a center maintained by \sofa and let $u$ be a
vertex which needs to be clustered. For each entry $i$ of $x_c$ and $x_u$, we
assign the following costs: If $x_c(i) = x_p(i)$, then the cost is $0$; if
$x_p(i) = 1$ and $x_c(i) = 0$ then the cost is $1$; if $x_p(i) = 0$ and $x_c(i)
= 1$ then the cost is $\alpha < 1$. Now the \emph{asymmetric weighted Hamming
distance} of $c$ and $p$ is simply the sum over the costs for all entries of
$x_c$ and $x_p$.

Note that by setting $\alpha = 1$ the above results in the classic (symmetric)
Hamming distance. Furthermore, setting $\alpha < 1$
promotes denser centers because the case of $x_c(i) = 1$ and $x_u(i) = 0$ is
penalized less than in classic Hamming distance.

For example, consider the vectors $x_{c_1}=(1,1,1,1,0)$, $x_{c_2}=(0,0,0,0,1)$
and $x_u=(1,0,0,0,0)$. In vanilla Hamming distance, $u$ would be assigned to
$c_2$ since their distance is $2$ and the distance of $c_1$ and $p$ is $3$.
With asymmetric weighted Hamming distance and $\alpha=0.1$, $u$ is
assigned to $c_1$ because their distance is $0.3$ and the distance is $u$ and
$c_2$ is $1.1$. Note the assignment of $u$ to $c_1$ instead of $c_2$ is also
much more suitable for the thresholding step in
Line~\ref{line:bmf:sofa-thresholding} of \sofa.

In practice, our experiments showed that setting $\alpha = 0.1$ was a good
choice for all datasets and the performance of our algorithms benefitted heavily
from using asymmetric weighted Hamming distance.

\subsection{Biclustering Algorithm}
\label{sec:bmf:implementation:biclustering}
To solve the biclustering problem from
Section~\ref{sec:bmf:preliminaries:biclustering}, we implemented \sofa
(Algorithm~\ref{algo:bmf:importance}) together with the biclustering algorithm from
Section~\ref{sec:bmf:algorithm-left:biclustering} for recovering the left clusters.
The only adjustment that we made was to use the $k$-Means implementation of
scikit-learn~\cite{scikit-learn}  in order to implement the $O(1)$-approximate
$k$-Medians algorithm in Line~\ref{line:bmf:offline-clustering} of \sofa.

\subsection{BMF Algorithm}
\label{sec:bmf:implementation:bmf}
To solve the BMF problem from Section~\ref{sec:bmf:preliminaries:bmf}, we
implemented \sofa (Algorithm~\ref{algo:bmf:importance}) together with the BMF algorithm
from Section~\ref{sec:bmf:algorithm-left:bmf} for recovering the left clusters.

During preliminary tests we observed that on some datasets we achieved better
results when we completely skipped the $k$-Median algorithm in
Line~\ref{line:bmf:offline-clustering} of \sofa.  Instead, we compute a cluster
$\tilde{V_c}$ for each center $c \in C$. Note that this might lead to more than
$k$ clusters $\tilde{V_c}$ but to at most $c_{\max}$. Then we use the BMF algorithm
from Section~\ref{sec:bmf:algorithm-left:bmf} to compute a cluster $\tilde{U_c}$ for
each of the (potentially more than $k$) clusters $\tilde{V_c}$.  While computing
the clusters $\tilde{U_c}$, we keep track of the total score of each cluster
$\tilde{V_c}$; this can be done by maintaining a counter $s_c$ for each $c \in C$
and increasing $s_c$ by $\score(\tilde{V_c} \mid \Gamma(u), Y_u)$ whenever we
compute $i^*$ in Equation~\eqref{eq:bmf:max-score}. To ensure that our algorithm
only returns $k$ clusters when it finishes, we sort the clusters $\tilde{V_c}$
by their score values $s_c$ in non-increasing order and only keep the $k$
clusters with the highest total scores. This ensures that at the end we only
return $k$ clusters.

While \sofa and the algorithm from Section~\ref{sec:bmf:algorithm-left:bmf} return
clusters $\tilde{U_i}$ and $\tilde{V_i}$ instead of Boolean factor matrices $L$
and $R$ as required for the BMF problem, we can transform
the clusters into factor matrices $L$ and $R$ as discussed in
Section~\ref{sec:bmf:preliminaries:bmf}.  This gives raise to a matrix
$\tilde{B} = L \circ R$ which approximates the biadjacency matrix~$B$ of the
input graph~$G$.

\subsection{Setting the Rounding Threshold $\theta$}
\label{sec:bmf:implementation:theta}
Next, we discuss how to set the rounding threshold $\theta$.

\textbf{A Heuristic for Determining $\theta$.}
The supplemental material of~\cite{neumann18bipartite} presents a heuristic for
setting $\theta$. It essentially works by observing that $\theta$ is a function
of the parameters $p$ and $q$ of the random graph model from
Section~\ref{sec:bmf:preliminaries:biclustering}. Then it performs a grid search
over different values of $p$ and $q$ and picks the pair $(p^*,q^*)$ for which
the resulting rounding threshold $\theta^*$ maximizes the likelihood of the
counters observed in the heavy hitters data structure from
Line~\ref{line:bmf:sofa-heavy-hitters} of \sofa. We refer to the supplemental
material of~\cite{neumann18bipartite} for the details of the heuristic.
We will refer to the version of \sofa which uses this heuristic as \sofaauto.

\textbf{Using Multiple Thresholds.}
Note that the only place in \sofa, where the rounding threshold $\theta$ is
used, is in the postprocessing step. Thus, given multiple rounding thresholds
$\theta_1,\dots,\theta_T$, it is possible to compute a set of clusters
$\tilde{V}_1^{(t)},\dots,\tilde{V}_k^{(t)}$ for each $\theta_t$.  Then for
each $t=1,\dots,T$, we can compute corresponding left-side clusters
$\tilde{U}_1^{(t)},\dots,\tilde{U}_k^{(t)}$ using the algorithms from
Section~\ref{sec:bmf:algorithm-left}. Note that computing the clusters
$\tilde{U}_i^{(t)}$ for all values of $t=1,\dots,T$ still only requires a single
pass over the stream: For each $u\in U$ of the stream, we can run the algorithms
for computing $\tilde{U}_1^{(t)},\dots,\tilde{U}_k^{(t)}$ in parallel for all
$t=1,\dots,T$.

In our experiments we will use the above strategy to generate clusters for
multiple thresholds. Then we will evaluate their quality in a separate
postprocessing step (see Section~\ref{sec:bmf:experiments:real}).  We will refer to
the version of \sofa which uses multiple thresholds simply as \sofa.

\subsection{Static to Streaming Reduction}
\label{sec:bmf:implementation:reduction}
Since many static algorithms do not scale to datasets of the size
considered in this paper, we describe a reduction for turning \emph{static}
biclustering/BMF algorithms into \emph{2-pass streaming} algorithms.  We
will use this reduction to compare \sofa against static algorithms in our
experiments.

In a nutshell, the reduction works as follows. First, we sample a subgraph
with $\tilde{m}$ left-side vertices and $\tilde{n}$ right-side vertices,
where $\tilde{m}\ll m$ and $\tilde{n}\ll n$ are parameters of the reduction.
Then we run the static algorithm on the sampled subgraph to determine a set
of right-side clusters $\tilde{V_1},\dots,\tilde{V_k}$ (see below for
details).  In the second pass over the stream, we use exactly the same
procedure as used by \sofa (see Section~\ref{sec:bmf:algorithm-left}) to infer
the left-side clusters $\tilde{U_1},\dots,\tilde{U_k}$.

Now, we elaborate on the first pass over the stream.  First, we use
reservoir sampling to obtain $\tilde{m}$ left-side vertices from the graph
uniformly at random; let $U'=\{u_1',\dots,u_{\tilde{m}}'\}$ denote this set
of left-side vertices.  Let $V'$ be the set of right-side vertices which are
adjacent to vertices in $U'$.  Note that possibly $\abs{V'}>\tilde{n}$ and let
$V''$ be the set of $\tilde{n}$ vertices in $V'$ with highest degree to
vertices in $U'$ (breaking ties arbitrarily).  Now we run the static
algorithm on the subgraph with the $\tilde{m}$ left-side vertices $U'$ and
$\tilde{n}$ right-side vertices $V''$. This gives raise to clusters
$\tilde{V_1},\dots,\tilde{V_k}$.  Next, we add the (low-degree) vertices
$v\in V'\setminus V''$ to the clusters $\tilde{V_i}$ by assigning each $v$
to the cluster $\tilde{V_i}$ which ``on average'' has the most similar
left-side neighborhood compared to $v$.  More concretely, for each vertex
$v\in V'$ we define the vector $x_v\in\{0,1\}^{\tilde{m}}$ such that
$x_v(i)=1$ iff $(u_i',v)\in E$.  Next, for each cluster $\tilde{V_i}$ define
the vector $x_i=\sum_{v\in\tilde{V_i}}x_v/\abs{\tilde{V_i}}$ which describes the
``average left-side neighborhood'' of the vertices in $\tilde{V_i}$.  Now we
assign each $v\in V'\setminus V''$ to $\tilde{V_{i^*}}$ with $i^*=\argmin_i
d(x_i,x_v)$.  This yields the final clusters
$\tilde{V_1},\dots,\tilde{V_k}$.

\section{Experiments}
\label{sec:bmf:experiments}

We evaluate \sofa on synthetic and on real-world datasets.  We conducted the
experiments on a workstation with 4 Intel~i7-3770 processors at 3.4~GHz and
16~GB of main memory.

\subsection{Synthetic Datasets}
\label{sec:bmf:experiments:synthetic}
We start by evaluating our biclustering version of \sofa from
Section~\ref{sec:bmf:implementation:biclustering} on synthetic data. We ran \sofa
with different numbers of centers $c_{\max} \in \{100,200\}$ and with $100$ and
$200$ counters in the heavy hitters data structures.

We
compare \sofa against three different algorithms. First, a version of the
algorithm from~\cite{neumann18bipartite}
which does not use any spectral preprocessing; this algorithm is denoted
\staticsofa.  \staticsofa can be viewed as a non-streaming version of \sofa,
i.e., it performs the clustering offline using $k$-Means (instead of streaming
$k$-Median) and then it performs the thresholding step
(Line~\ref{line:bmf:sofa-thresholding}) using the exact frequency counts (instead of
the approximate frequency counts from the heavy hitters data structures). Thus,
\staticsofa essentially provides an upper bound on how good the streaming
version of \sofa can potentially get.
Next, we turn the static biclustering algorithms by
Dhillon~\cite{dhillon01coclustering} and Zha et al.~\cite{zha01bipartite}
into streaming algorithms via the reduction from
Section~\ref{sec:bmf:implementation:reduction}, where we set
$\tilde{m}=\tilde{n}=5000$, i.e., we sample subgraphs with 5000 vertices on
both sides. We denote these algorithms \dhillon and \zha, where
$\textsf{RS}$ stands for \emph{random subgraph}.

\textbf{Data Generation and Quality Measure.}
We generated the synthetic data as follows. We start with an empty graph and
then for each ground-truth cluster $U_i$, we insert $\ell$ vertices (see below
for which values of $\ell$ we used in the experiments). Then we inserted
8000 vertices on the right side of the graph (i.e., $\abs{V}=n=8000$). To generate
the ground-truth clusters $V_i$, we simply picked $r$ vertices uniformly at
random from $V$ for each $i$ (see below for how $r$ was set in the experiments).
Now the random edges were inserted exactly as described in the random graph
model from Section~\ref{sec:bmf:preliminaries:biclustering}.

When not mentioned otherwise, we have set the parameters for the graph
generation as follows: $n=8000$, $k=50$, $\ell=200$ (and, hence,
$\abs{U}=m=k\cdot\ell=10\,000$), $p=0.7$, $r=30$. Furthermore, we set $q$ such that
in expectation every left-side vertex obtains $20$ random neighbors. 

To evaluate the output of the algorithms, let $U_1,\dots,U_k$ be the planted
ground-truth clusters and let $\tilde{U_1},\dots,\tilde{U_k}$ be the clusters
returned by one of the algorithms. We define the \emph{quality $Q$} of the
clustering $\tilde{U_1},\dots,\tilde{U_k}$ as 
\begin{align*}
	Q = \frac{1}{k} \sum_{i = 1}^k \max_{j = 1,\dots,s} J(U_i, \tilde U_j) \in [0,1],
\end{align*}
where $J(A,B) = \abs{A \cap B} / \abs{A \cup B}$ is the Jaccard coefficient. That is,
for each ground-truth cluster $U_i$, we find the cluster $\tilde U_j$ which
maximizes the Jaccard coefficient of $U_i$ and $\tilde U_j$. The quality is then
simply the sum over the Jaccard coefficients for all ground-truth clusters
$U_i$, normalized by $k$. Clearly, higher values for $Q$ imply a clustering
closer to the planted clustering.  For example, if the clusters $\tilde U_j$
match \emph{exactly} the ground-truth clusters $U_i$ then $Q=1$.  We evaluate
the quality of the clusters $\tilde{V_i}$ in exactly the same way.

\textbf{Experiments.}
Next, let us discuss the outcomes of our experiments in different scenarios,
where each time we vary one of the parameters. For each set of parameters we
generated 15 different datasets and we will be reporting averages and standard
deviations for the recovery quality of the algorithms.  Our results are reported
in Figure~\ref{fig:bmf:experiments}.

\emph{Varying Amount of Signal.} First, let us consider a varying
amount of signal, i.e., we set $p\in\{0.5,0.6,0.7,0.8,0.9\}$. One can see in
Figures~\ref{fig:bmf:vary_p_figLeft} and~\ref{fig:bmf:vary_p_figRight} that the quality
of all \sofa-versions improves as $p$ increases. Furthermore, \staticsofa
achieves the best quality for recovering the left and right clusters. The
second-best \sofa-version is \sofa with 200 counters and 200 centers and achieves
between 0.05 and 0.1 less quality than \staticsofa; we ran significance tests
and these differences are significant. When only providing 100 centers, \sofa
has some problems for values $p\in\{0.5,0.6\}$; this is not surprising since we
planted $50$ clusters and thus only maintaining $100$ centers is quite restrictive
for \sofa. 
The right-side recovery of \dhillon and \zha is relatively constant, where
\dhillon is performing on a high level; we explain the flatness of the curves by
the spectral methods used in the algorithms, which ``denoise'' the data well
even for small $p$.  The left-side recovery of both algorithms is clearly worse
than those of the \sofa-versions.
Regarding the running times (Figure~\ref{fig:bmf:vary_p_figTimes}), we
see that all versions of \sofa are about a factor $3$ faster than \staticsofa;
note that \sofa with 100 centers is also significantly faster than the versions
of \sofa with 200 centers.
\dhillon and \zha are about factor 1.5--2 slower than \sofa.

\emph{Varying Size of Right Clusters.} Next, we varied the sizes
$r\in\{15,20,30,50\}$ of the planted right clusters $V_i$. We can see
(Figures~\ref{fig:bmf:vary_size_figLeft} and~\ref{fig:bmf:vary_size_figRight}) that most
algorithms benefit from larger $r$ and that once again \staticsofa is the best
method, followed by \sofa with 200~counters and 200~centers.  When the right
clusters are very small (sizes $15$, $20$), \sofa is much worse than \staticsofa 
and \dhillon. Indeed, for small values of $r$, the vertices become
much harder to cluster for \sofa, because the Hamming distances of the vertices
get dominated by noise. However, for $r\geq 30$, the version of
\sofa with 200~counters and 200~centers only has a 0.1~gap in quality compared
to \staticsofa.  Furthermore, observe that the performance of \sofa with only
100~counters in the heavy hitters data structures drops dramatically for $r=50$;
this is caused by the frequency estimations of the right-side vertices getting
too inaccurate due to the too small number of counters in the heavy hitters data
structures.  
\dhillon's quality is again relatively constant at roughly the same
level as before, while \zha clearly benefits from larger cluster sizes.
The
running times of the algorithms (Figure~\ref{fig:bmf:vary_size_figTimes}) slightly
rise as $r$ increases since the datasets contain more non-zero entries.

\emph{Varying Size of Left Clusters.} Finally, we varied the size~$\ell$ of the
left clusters $U_i$ and set $\ell\in\{100,150,200,300,400,500,600\}$.
Note that this implies that we are also varying the number of left-side vertices
of the bipartite graph and, hence, also the total number of edges in the graph.
Figures~\ref{fig:bmf:vary_l_figLeft} and~\ref{fig:bmf:vary_l_figRight} show that
the recovery quality is relatively unaffected from this change in $\ell$ and
that the ranking of the algorithms is as before.
However, note that the running times of \staticsofa increase much
more rapidly than those of the streaming algorithms.
For example, for
$\ell=100$ the running times of \sofa and \staticsofa differ by a factor of less
than 2 but for $\ell=600$ this is already approximately 7.

\emph{Conclusion.} We conclude that \sofa can achieve recovery qualities close
to the static baseline even when its number of centers is only $4k$ and its
number of counters is within factor~4 of the size of the right-side clusters.
Furthermore, \sofa's run-time scales much better than the static baseline's.
While \dhillon delivered good quality for right-side recovery, its left-side
recovery was rather poor. \zha performs badly overall; we blame this on the
data being too sparse, which does not allow the algorithm to find good cuts.

\begin{figure*}[t!]
\makebox[\linewidth][c]{
  \subfloat{
	  \includegraphics[width=4\smallfigwidth]{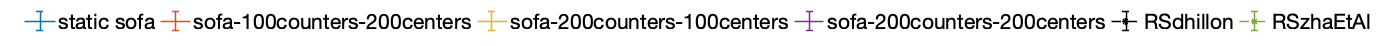}
  }
}
  \addtocounter{subfigure}{-1}

\makebox[\linewidth][c]{
  \subfloat[\small Vary $p$: Left Cluster Quality]{
    \label{fig:bmf:vary_p_figLeft}
    \includegraphics[width=\smallfigwidth]{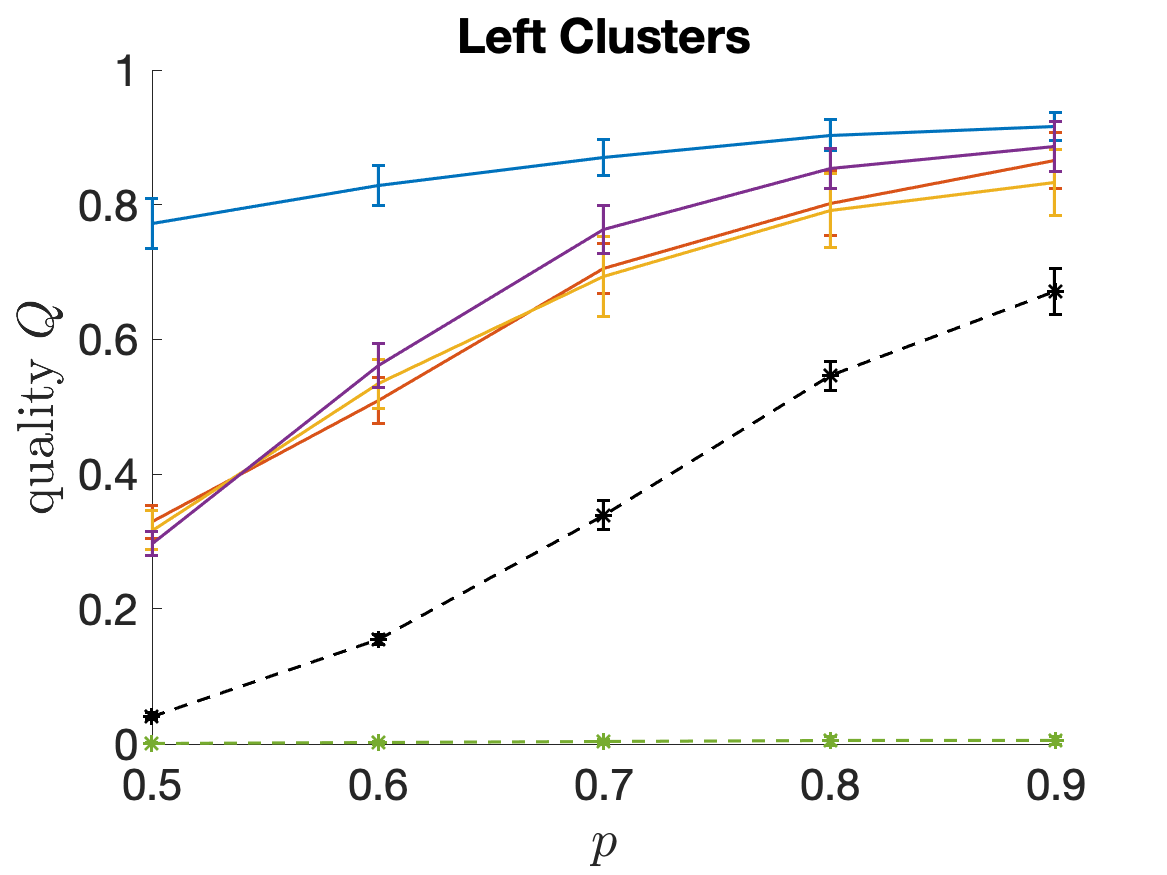}
  }\hspace*{\smallfigsep}
  \subfloat[\small Vary $p$: Right Cluster Quality]{
    \label{fig:bmf:vary_p_figRight}
    \includegraphics[width=\smallfigwidth]{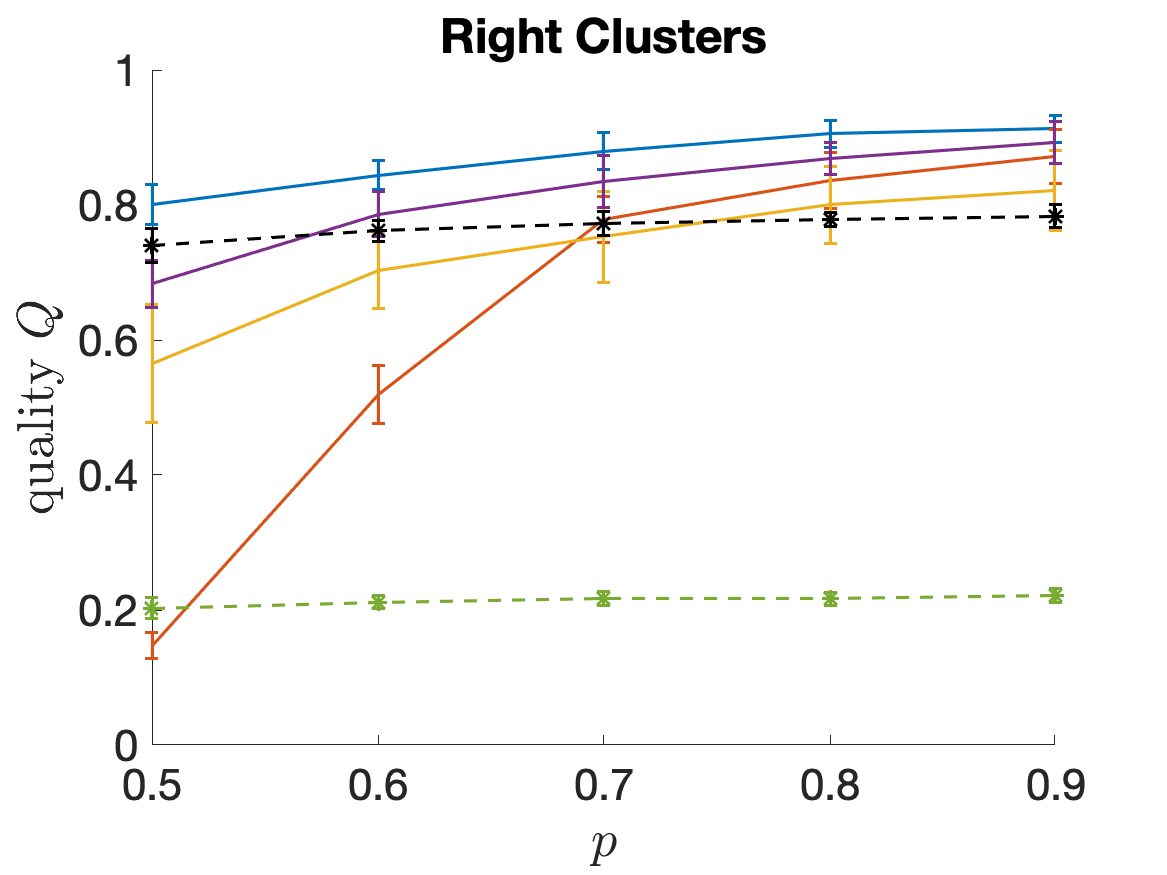}
  }\hspace*{\smallfigsep}
  \subfloat[\small Vary $p$: Running times (sec)]{
    \label{fig:bmf:vary_p_figTimes}
    \includegraphics[width=\smallfigwidth]{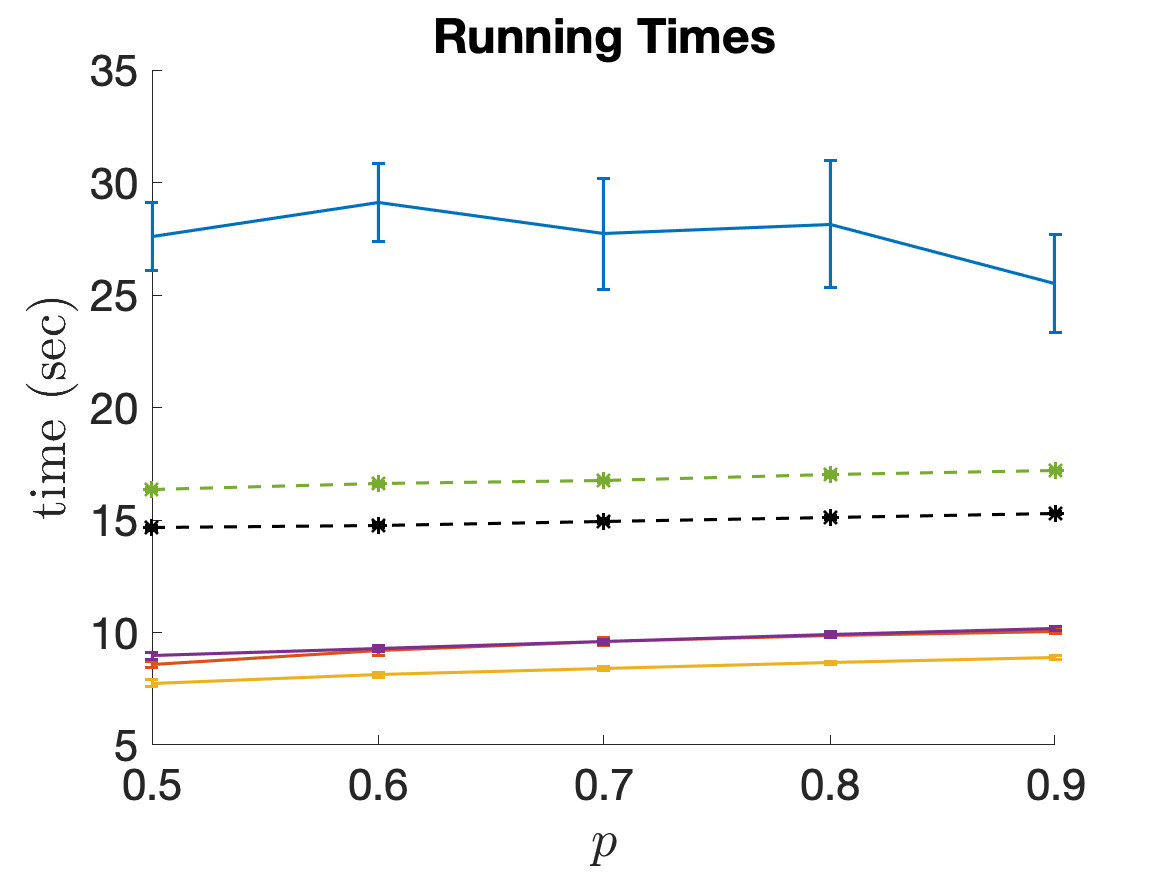}
  }
}

\makebox[\linewidth][c]{
  \subfloat[\small Vary $\abs{V_i}$: Left Cluster Quality]{
    \label{fig:bmf:vary_size_figLeft}
    \includegraphics[width=\smallfigwidth]{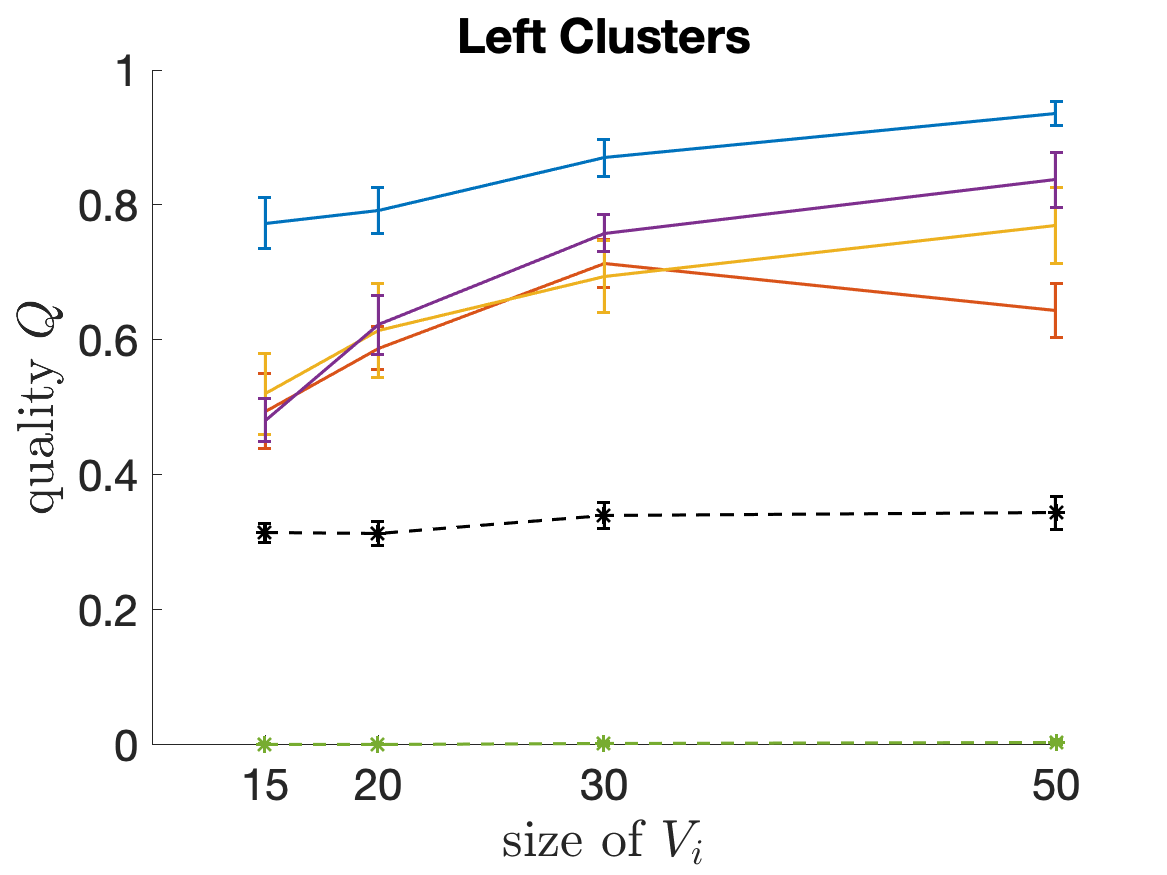}
  }\hspace*{\smallfigsep}
  \subfloat[\small Vary $\abs{V_i}$: Right Cluster Quality]{
    \label{fig:bmf:vary_size_figRight}
    \includegraphics[width=\smallfigwidth]{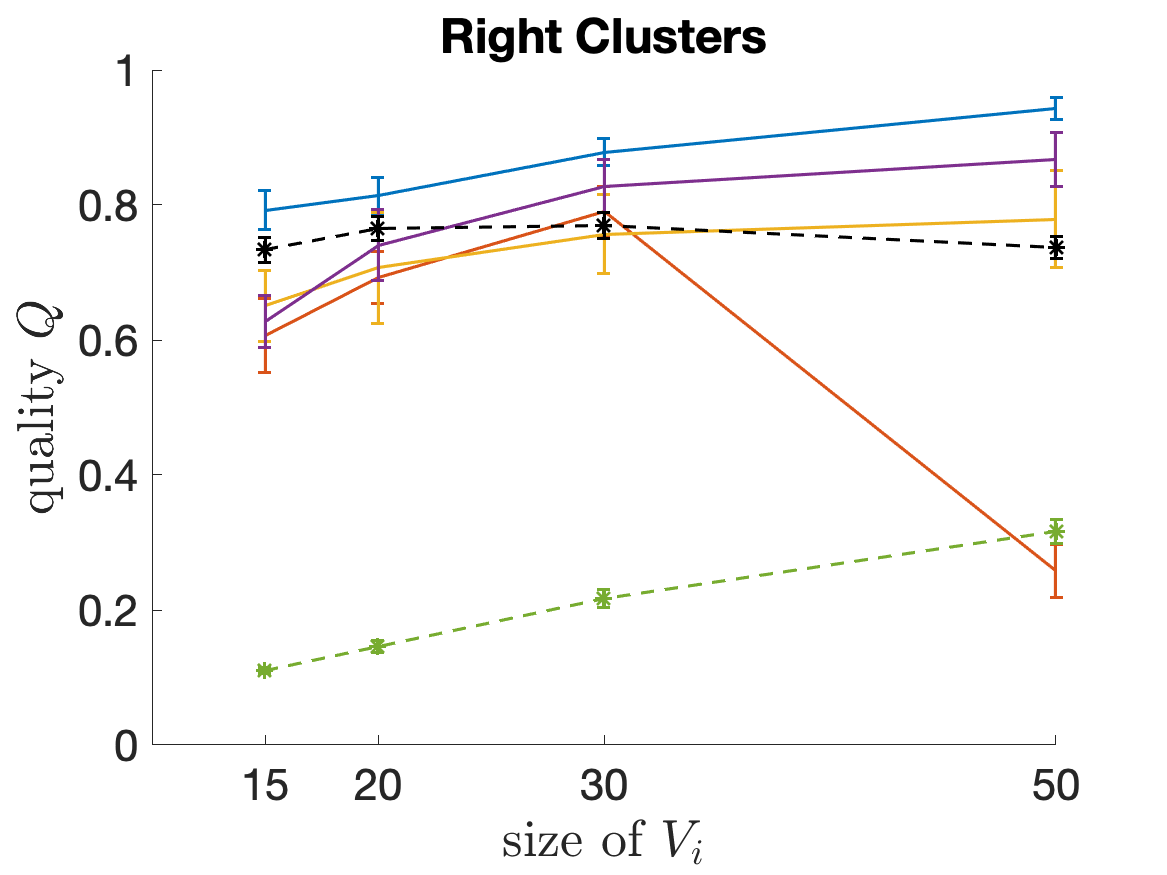}
  }\hspace*{\smallfigsep}
  \subfloat[\small Vary $\abs{V_i}$: Running times (sec)]{
    \label{fig:bmf:vary_size_figTimes}
    \includegraphics[width=\smallfigwidth]{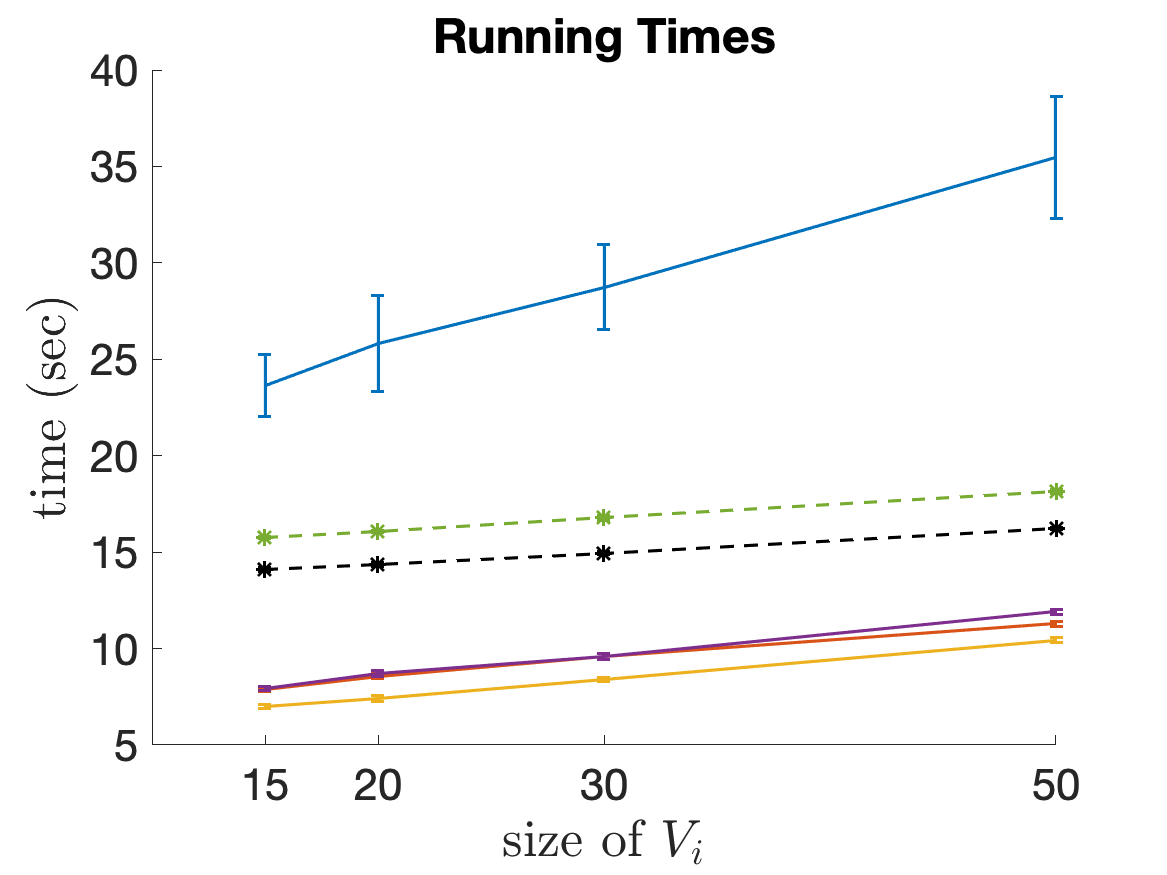}
  }
}

\makebox[\linewidth][c]{
  \subfloat[\small Vary $\abs{U_i}$: Left Cluster Quality]{
    \label{fig:bmf:vary_l_figLeft}
    \includegraphics[width=\smallfigwidth]{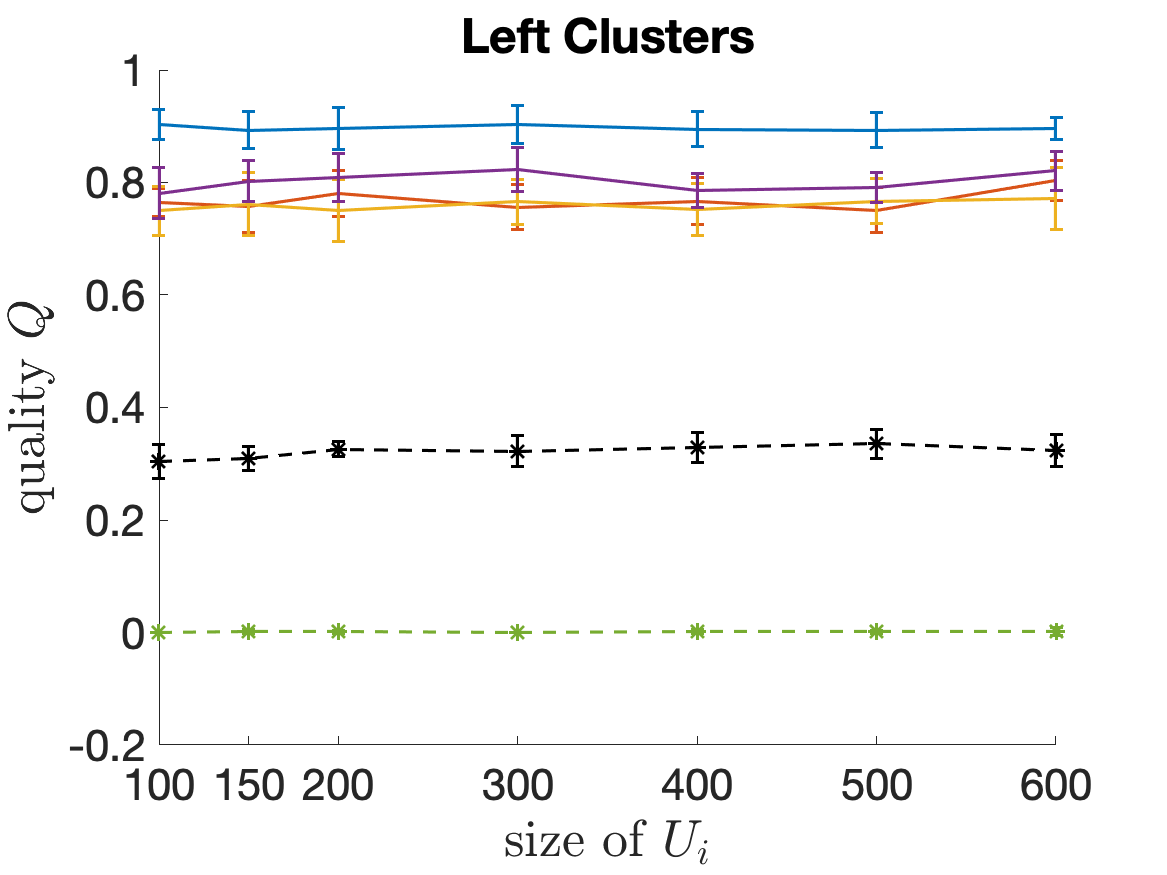}
  }\hspace*{\smallfigsep}
  \subfloat[\small Vary $\abs{U_i}$: Right Cluster Quality]{
    \label{fig:bmf:vary_l_figRight}
    \includegraphics[width=\smallfigwidth]{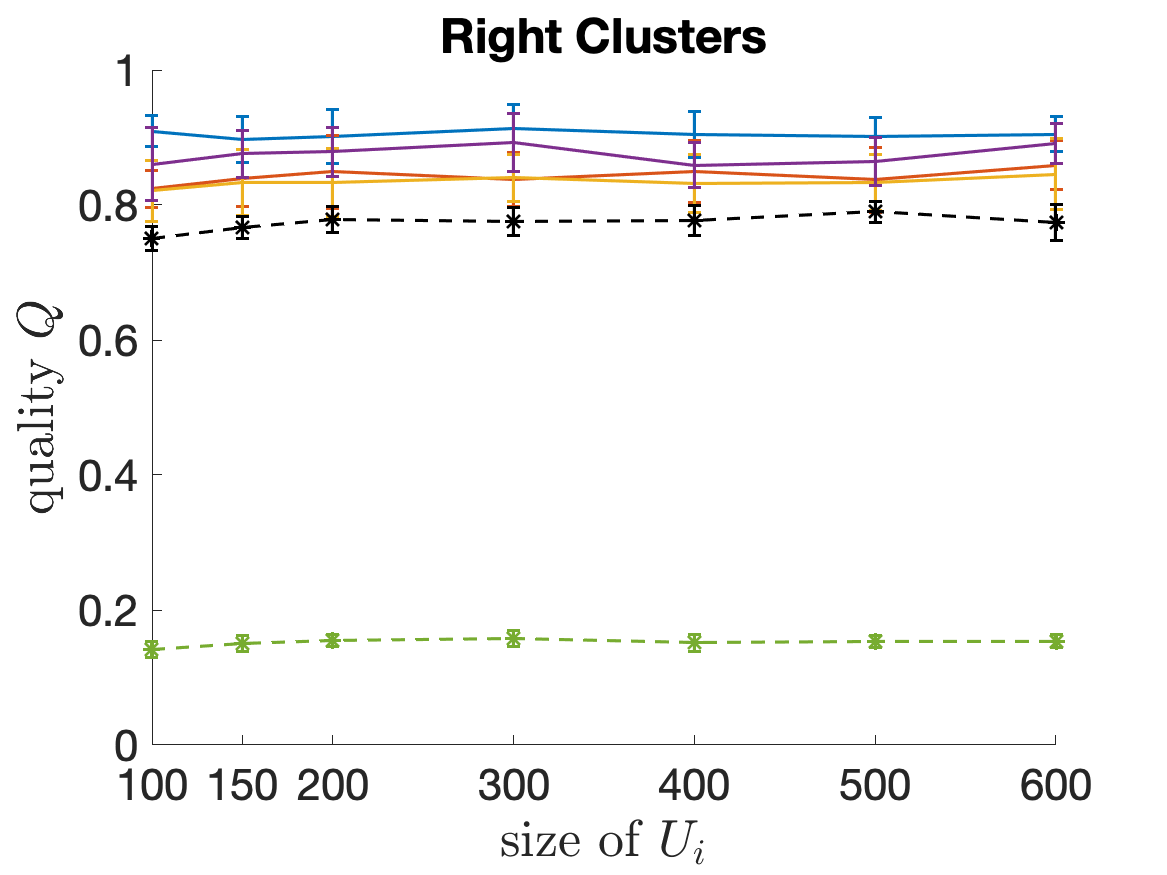}
  }\hspace*{\smallfigsep}
  \subfloat[\small Vary $\abs{U_i}$: Running times (sec)]{
    \label{fig:bmf:vary_l_figTimes}
    \includegraphics[width=\smallfigwidth]{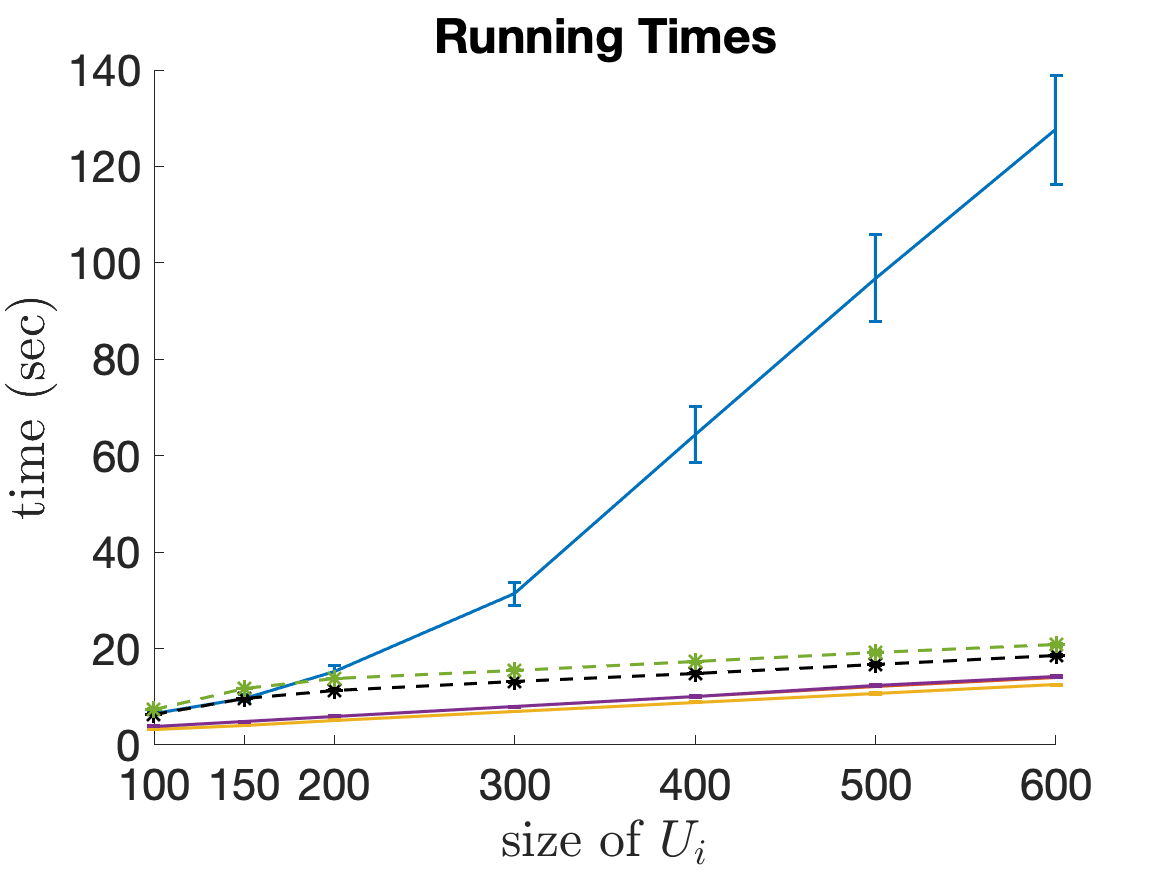}
  }
}
  \caption{ Results on synthetic data.
	  Figures~\ref{fig:bmf:vary_p_figLeft}--\ref{fig:bmf:vary_p_figTimes} have varying $p$,
	  Figures~\ref{fig:bmf:vary_size_figLeft}--\ref{fig:bmf:vary_size_figTimes} have
	  varying sizes of the right clusters $V_i$,
	  Figures~\ref{fig:bmf:vary_l_figLeft}--\ref{fig:bmf:vary_l_figTimes} have varying
		  sizes of the left clusters $U_i$.
	  Markers are mean values over 15 different datasets; error bars are one
	  standard deviation over the 15 datasets. }
  \label{fig:bmf:experiments}
\end{figure*}

\subsection{Real-World Datasets}
\label{sec:bmf:experiments:real}

For the real-world experiments, it is more realistic to allow the left-side
clusters $U_i$ to overlap. Thus, for the real-world experiments, we use the
version of \sofa which solves the BMF problem from
Section~\ref{sec:bmf:implementation:bmf}.

\textbf{Methods and Measures.}
For these experiments, we use \sofa and \sofaauto. For \sofa, we set the
threshold $\theta$ using a line search and we use the values
$\theta\in\{0.3,0.4,0.5,0.6,0.7\}$. The remaining parameters were set as
follows: $c_{\max}=20k$, where $k$ is the desired number of clusters;
$s=P_{99}$, the $99$th quantile of the degrees on the left-side vertices (see
Table~\ref{tab:bmf:exp:properties} for the values for each dataset); and we set the
number of counters in the heavy hitters data structures to $\max\{3s,0.05n\}$.

As for the synthetic datasets, we compare $\sofa$ against \dhillon
and \zha. We used $\tilde{m}=\tilde{n}=15000$ in the reduction. With
these parameters, \dhillon and \zha have running times comparable to \sofa and
already for $\tilde{m}=\tilde{n}=20000$, our workstation would often run out of
memory.
Further, we compare against the static (i.e., non-streaming) algorithm 
\basso\footnote{\basso v0.5 from \small\url{http://cs.uef.fi/~pauli/basso/}}, which is
an efficient implementation of the \asso algorithm~\cite{miettinen08discrete}.
\basso has one hyperparameter, $\tau$. We try values $\tau\in\{0.2, 0.4, 0.6, 0.8\}$ and
report the results with the best value. For run-time and memory usage analysis,
we report average values over different thresholds. The time complexity
of \basso is $O(k\abs{U}^2\abs{V})$ and thus we flipped $U$
and $V$ in the input for \basso when $\abs{U} > \abs{V}$.

For all datasets, we computed clusterings consisting of
$k=50,100,200$ clusters.
Since for the real-world datasets no information about the ground-truth
clusters is available, we use relative Hamming gain and recall as quality
measures to evaluate the obtained clusterings.
Formally, let $B$ be the biadjacency matrix of the bipartite graph and
let $\tilde{B}$ an approximation thereof.
The \emph{relative Hamming gain} is defined as
\(
  1 - \abs{\{(i, j) : B_{ij} \neq \tilde{B_{ij}}\}}/
  \abs{\{(i, j) : B_{ij} = 1\}}
\),
and it indicates how much better $\tilde{B}$ approximates $B$ than a trivial
(all-zeros) matrix would. The \emph{recall} is defined as
\(
  \abs{\{(i, j) : B_{ij} = 1 \land \tilde{B_{ij}} = 1\}}/
  \abs{\{(i, j) : B_{ij} = 1\}} 
\),
and it indicates the fraction of edges (1s) in $B$ which are ``covered''
correctly by the matrix $\tilde{B}$ returned by one of the algorithm.

\textbf{Explanation of Datasets.}
In our experiments, we used six real-world datasets. Their basic properties are
described in Table~\ref{tab:bmf:exp:properties}. Notice that all datasets are very
sparse, and their left-side degrees (even in the $99$th percentile) are small
compared to the number of vertices on the right side of the graph.  This
empirically validates two of the three properties we discussed in the
introduction.

\begin{table*}[t]
  \small
  \centering
  \caption{Real-world dataset properties. Datasets are considered as bipartite
	  graphs $G=(U\cup V, E)$ and density is $\abs{E}/(\abs{U}\cdot \abs{V})$.
	  Average degree $\overline{\text{deg}}$ and the $99$th percentile degree
	  $P_{99}$ are calculated from $U$ and rounded to the nearest integer.}
  \label{tab:bmf:exp:properties}
  \begin{tabular}{@{}lRRRRRR@{}}
    \toprule
    Dataset & \abs{U} & \abs{V} & \abs{E} &  $density$ & \overline{\text{deg}} & P_{99} \\
    \midrule
    \News        &       18\,773 &        61\,056 &  1\,766\,780 &  0.0015 &  94 &    548 \\
    \Reuters     &       38\,677 &        19\,757 &       978\,446 & 0.0013 &   25 &   498 \\
    \Book         &     105\,282 &      340\,550 &  1\,149\,779 &<0.0001 &  11 &    174 \\
    \Movielens &     138\,493 &        26\,744 & 20\,000\,263 & 0.0054 & 144 & 1113 \\
    \Flickr        &     395\,979 &      103\,631 &   8\,545\,307 & 0.0002 &  22 &   268 \\
    \Wikipedia & 1\,562\,433 & 1\,170\,854 & 19\,753\,078 &<0.0001 &  17 &   177 \\
    \bottomrule
  \end{tabular}
\end{table*}

Let us briefly discuss each of the datasets.
\News\footnote{\small\url{http://qwone.com/~jason/20Newsgroups/}} contains newsgroup
postings on the left side and words on the right side; edges indicate a word
appearing in a posting.
The datasets \Reuters and \Flickr were taken from the
KONECT\footnote{\small\url{http://konect.uni-koblenz.de}}~\cite{kunegis13konect}
website.  \Reuters has articles from the news organization Reuters on the left
side and words on the right. \Flickr encodes the group memberships (right) of
Flickr users (left).
\Wikipedia\footnote{\small\url{https://www.cise.ufl.edu/research/sparse/matrices/Gleich/wikipedia-20051105}}
is from the SuiteSparse Matrix Collection\cite{davis11university} and consists
of Wikipedia pages on both sides of the graph; an edge $(u,v)$ indicates that
page $u$ links to page $v$ (note that this relationship is asymmetric).
\Book\footnote{\small\url{http://www2.informatik.uni-freiburg.de/\~cziegler/BX/}}\cite{ziegler05improving}
is a rating matrix consisting of users on the left side and books on the right
side; an edge indicates that a user rated book.
\Movielens\footnote{\small\url{https://grouplens.org/datasets/movielens/20m/}} is a
rating matrix between users and movies~\cite{harper16movielens}.

\textbf{Experiments.}
Results for relative Hamming gain and recall are presented in
Tables~\ref{tab:bmf:exp:hamming_gain} and~\ref{tab:bmf:exp:recall}, respectively. Note that \basso did not finish on the
\Wikipedia dataset, because it ran out of memory.

\begin{table*}[t]
  \small
  \centering
  \caption{Relative Hamming gain different real-world datasets}
  \label{tab:bmf:exp:hamming_gain}
  \begin{tabular}{@{}RlRRRRRR@{}}
  \toprule
  k & Algorithm & \multicolumn{6}{c}{Relative Hamming gain} \\
  & & $\News$ & $\Reuters$ & $\Book$ & $\Movielens$ & $\Flickr$ & $\Wikipedia$ \\
  \midrule
   50 & \sofaauto & 0.0298 & 0.0450 & 0.0198 & 0.0805 & 0.0380 & 0.0617 \\
      & \sofa & 0.0424 & 0.0454 & 0.0212 & 0.1188 & 0.0453 & 0.0695 \\
      & \basso & 0.0545 & 0.1005 & 0.1226 & 0.1394 & 0.0719 & -\\
      & \dhillon & 0.0042 & 0.0273 & 0.0008 & 0.1056 & 0.0040 & 0.0001 \\
      & \zha & 0.0001 & 0.0274 & 0.0008 & 0.0297 & 0.0000 & 0.0000 \\
  \midrule
  100 & \sofaauto & 0.0411 & 0.0792 & 0.0298 & 0.1028 & 0.0486 & 0.0730 \\
      & \sofa & 0.0574 & 0.0777 & 0.0333 & 0.1367 & 0.0668 & 0.0824 \\
      & \basso & 0.0793 & 0.1097 & 0.1783 & 0.1739 & 0.1068 & -\\
      & \dhillon & 0.0059 & 0.0307 & 0.0028 & 0.1378 & 0.0137 & 0.0262 \\
      & \zha & 0.0006 & 0.0342 & 0.0030 & 0.0696 & 0.0000 & 0.0000\\
  \midrule
  200 & \sofaauto & 0.0624 & 0.1253 & 0.0427 & 0.1247 & 0.0663 & 0.0861\\
      & \sofa & 0.0930 & 0.1254 & 0.0472 & 0.1598 & 0.0817 & 0.1061 \\
      & \basso & 0.1171 & 0.1334 & 0.2531 & 0.2376 & 0.1556 & -\\
      & \dhillon & 0.0092 & 0.0402 & 0.0024 & 0.1771 & 0.0203 & 0.0270\\
      & \zha & 0.0014 & 0.0291 & 0.0017 & 0.1104 & 0.0007 & 0.0001 \\
  \bottomrule
\end{tabular}

\end{table*}

The results for relative Hamming gain in Table~\ref{tab:bmf:exp:hamming_gain} show that, when it is able to
finish, \basso is always the best method. This is to be expected as it can make
unlimited passes over the data. 
On all datasets except \Book and for all values of $k$, the results of \sofa and
\basso are within factor at most $2.2$. For $k=200$, the results of \sofa are at
most 50\% worse than those of \basso on \News, \Reuters and \Movielens.
With \Book, on the other hand, \sofa is significantly worse (up to factor~5.8)
but still much better than \dhillon and \zha. We believe this results from \Book
being too sparse; indeed, the 50\% percentile of the degrees of the left
vertices in book is $1$ and thus \sofa's clustering seems to fails. Overall, the
results of \sofa and \sofaauto improve significantly as $k$ increases, showing
that it can be used for small and large values of $k$ alike.  \dhillon and \zha
perform well when $|V|$ is small (e.g., \Movielens and \Reuters), but as soon
as $|V|$ increases, their results decays dramatically (e.g., \Book, \Flickr and
\Wikipedia); this appears to be a limitation of the random sampling approach.

\begin{table*}[t]
  \small
  \centering
  \caption{Recall in different real-world datasets}
  \label{tab:bmf:exp:recall}
  \begin{tabular}{@{}RlRRRRRR@{}}
  \toprule
  k & Algorithm & \multicolumn{6}{c}{Recall} \\
  & & $\News$ & $\Reuters$ & $\Book$ & $\Movielens$ & $\Flickr$ & $\Wikipedia$  \\
  \midrule
   50 & \sofaauto & 0.0446 & 0.0649 & 0.0201 & 0.1262 & 0.0480 & 0.0657 \\
      & \sofa & 0.0483 & 0.0652 & 0.0214 & 0.1779 & 0.0474 & 0.0700 \\
      & \basso & 0.0683 & 0.1677 & 0.1226 & 0.2855 & 0.0760 & - \\
      & \dhillon & 0.0069 & 0.0316 & 0.0009 & 0.1999 & 0.0088 & 0.0001 \\
      & \zha & 0.0004 & 0.0447 & 0.0014 & 0.0614 & 0.0001 & 0.0000 \\
  \midrule
  100 & \sofaauto & 0.0570 & 0.0991 & 0.0307 & 0.1597 & 0.0636 & 0.0777 \\
      & \sofa & 0.0649 & 0.0987 & 0.0341 & 0.2030 & 0.0721 & 0.0840 \\
      & \basso & 0.0959 & 0.1907 & 0.1783 & 0.3143 & 0.1124 & - \\
      & \dhillon & 0.0103 & 0.0430 & 0.0060 & 0.2400 & 0.0246 & 0.0302 \\
      & \zha & 0.0017 & 0.0500 & 0.0040 & 0.1182 & 0.0002 & 0.0000 \\
  \midrule
  200 & \sofaauto & 0.0788 & 0.1441 & 0.0435 & 0.1926 & 0.0837 & 0.0924 \\
      & \sofa & 0.0991 & 0.1442 & 0.0479 & 0.2353 & 0.0906 & 0.1087 \\
      & \basso & 0.1321 & 0.2100 & 0.2532 & 0.3521 & 0.1603 & - \\
      & \dhillon & 0.0159 & 0.0619 & 0.0030 & 0.2812 & 0.0317 & 0.0299 \\
      & \zha & 0.0022 & 0.0454 & 0.0027 & 0.1644 & 0.0021 & 0.0002 \\
  \bottomrule
\end{tabular}

\end{table*}

The results concerning the recall in Table~\ref{tab:bmf:exp:recall} look very similar to relative Hamming gain:
For all datasets except \Book, \sofa has approximately $50\%$ of the recall of \basso, and in
\Book it is again significantly worse. For \Wikipedia, \sofa has results that
are comparable to other datasets, thus, the size of \Wikipedia datasets does not
seem to affect the quality.  For \dhillon and \zha we observe a similar behavior
as above.

Using the heuristic in \sofaauto to set the threshold typically leads to
slightly worse results than setting it using line search. Given that the
heuristic is usually 3--4~times as fast, there seems to be a tradeoff which
version one should pick.

\begin{table*}[t]
  \small
  \centering
  \caption{Algorithm run-time on different real-world datasets}
  \label{tab:bmf:exp:time}
  \begin{tabular}{@{}RlRRRRRR@{}}
  \toprule
  k & Algorithm & \multicolumn{6}{c}{Run-time in CPU minutes} \\
  & & $\News$ & $\Reuters$ & $\Book$ & $\Movielens$ & $\Flickr$ & $\Wikipedia$ \\
  \midrule
   50 & \sofaauto & 2.1 & 3.2 & 1.7 & 45.9 & 9.7 & 14.1 \\
      & \sofa & 6.2 & 10.3 & 5.5 & 120.0 & 24.0 & 42.9 \\
      & \basso & 22.7 & 13.2 & 2951.8 & 598.1 & 4667.8 & -\\
      & \dhillon & 28.1 & 23.1 & 16.4 & 27.8 & 21.0 & 49.7\\
      & \zha & 36.0 & 75.2 & 75.4 & 35.9 & 98.5 & 76.3 \\
  \midrule
  100 & \sofaauto & 5.2 & 8.3 & 4.7 & 102.2 & 19.9 & 25.8 \\
      & \sofa & 15.6 & 25.4 & 16.5 & 311.6 & 52.7 & 70.4 \\
      & \basso & 24.6 & 13.6 & 3003.8 & 932.3 & 5066.0 & - \\
      & \dhillon & 26.9 & 23.7 & 18.1 & 31.2 & 23.0 & 55.5 \\
      & \zha & 41.6 & 81.2 & 80.7 & 39.7 & 172.3 & 63.7 \\
  \midrule
  200 & \sofaauto & 12.2 & 34.8 & 14.2 & 229.1 & 63.7 & 57.1 \\
      & \sofa & 43.5 & 142.8 & 60.4 & 959.0 & 161.4 & 157.5 \\
      & \basso & 26.7 & 14.3 & 3097.4 & 1441.2 & 5574.1 & - \\
      & \dhillon & 25.3 & 23.1 & 20.8 & 42.2 & 25.8 & 68.3 \\
      & \zha & 39.4 & 90.0 & 68.6 & 51.5 & 350.8 & 100.9 \\
  \bottomrule
\end{tabular}

\end{table*}

The running times of the algorithms are presented in Table~\ref{tab:bmf:exp:time}.
For \sofa and \sofaauto, presented is the total running time (with full line
search in \sofa); for \basso, the presented time is the \emph{average time for
a single value} of the threshold parameter $\tau$. Still, \basso is consistently
the slowest method, often by orders of magnitude.
The run-times of \dhillon and \zha scale well in $k$, since the size of the
sampled subgraph and, hence, the time spent on the static computation, is
largely unaffected by the choice of $k$.

\begin{table*}[t]
  \small
  \centering
  \caption{Algorithm memory usage on different real-world datasets}
  \label{tab:bmf:exp:memory}
  \begin{tabular}{@{}RlRRRRRR@{}}
  \toprule
  k & Algorithm & \multicolumn{6}{c}{Memory in GB} \\
  & & $\News$ & $\Reuters$ & $\Book$ & $\Movielens$ & $\Flickr$ & $\Wikipedia$ \\
  \midrule
   50 & \sofaauto & 0.15 & 0.12 & 0.10 & 0.24 & 0.21 & 0.20 \\
      & \sofa &  0.16 & 0.13 & 0.10 & 0.24 & 0.20 & 0.22 \\
      & \basso & 0.40 & 0.66 & 10.81 & 1.80 & 11.48 & - \\
      & \dhillon & 8.95 & 8.70 & 6.12 & 8.99 & 7.16 & 5.61 \\
      & \zha & 10.72 & 10.43 & 7.26 & 10.73 & 8.63 & 6.57 \\
  \midrule
  100 & \sofaauto & 0.19 & 0.14 & 0.11 & 0.33 & 0.27 & 0.30 \\
      & \sofa & 0.20 & 0.17 & 0.13 & 0.33 & 0.26 & 0.30 \\
      & \basso & 0.40 & 0.67 & 10.95 & 1.80 & 11.79 & - \\
      & \dhillon & 8.96 & 8.70 & 6.09 & 8.99 & 7.20 & 5.54 \\
      & \zha & 10.71 & 10.40 & 7.26 & 10.73 & 8.58 & 6.63 \\
  \midrule
  200 & \sofaauto & 0.25 & 0.18 & 0.13 & 0.49 & 0.36 & 0.43 \\
      & \sofa & 0.26 & 0.22 & 0.17 & 0.50 & 0.36 & 0.42 \\
      & \basso & 0.40 & 0.67 & 10.99 & 1.80 & 12.22 & - \\
      & \dhillon & 8.96 & 8.68 & 6.00 & 8.98 & 7.18 & 5.57 \\
      & \zha & 10.72 & 10.46 & 7.30 & 10.73 & 8.54 & 6.63 \\
  \bottomrule
\end{tabular}

\end{table*}

The memory usages of the algorithms are presented in Table~\ref{tab:bmf:exp:memory}.
\basso again needs significantly more resources.
\sofa and \sofaauto can compute clusterings of graphs with millions of vertices
and edges, while never using more than 500~MB of RAM.  \dhillon and \zha have
relatively large memory footprints (using gigabytes of memory) due to the
spectral methods they use.

Overall, the real-world experiments show that \sofa can achieve results that are
not too far from a static baseline method, while using only a fraction of
resources.

\section{Theoretical Guarantees}
\label{sec:bmf:theory}
\label{sec:bmf:analysis-greedy}
\label{sec:bmf:analysis-importance}

We prove the theoretical guarantees of our algorithms.

\subsection{Proof of Theorem~\ref{thm:bmf:algo-toy}}
For all proofs we assume that the conditions from
Theorem~\ref{thm:bmf:algo-toy} hold. The concrete values of the constants
$K_j$ are set inside the proofs. 
We start by characterising the distances of vertices from the same
cluster $U_i$ and vertices from different clusters $U_i$ and $U_{i'}$.

\begin{lemma}
\label{lem:bmf:distances}
	Let $u,u' \in U_i$ and let $u'' \in U_{i'}$ for $i' \neq i$. Then with
	probability at least $1-m^{-3}$,
	\begin{align*}
		d(x_u,&x_{u'}) < 1.01 \left[ 2 |V_i| p(1-p) + 2 (|V \setminus V_i|) q(1-q) \right], \\
		d(x_u,&x_{u''}) > 0.99 \large[ |V_i \triangle V_{i'}|(p(1-q) + q(1-p)) \\
							&+ 2 |V_i \cap V_{i'}| p(1-p)
							+ 2 |V \setminus (V_i \cup V_{i'})| q(1-q) \large].
	\end{align*}
\end{lemma}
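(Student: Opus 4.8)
The plan is to write each Hamming distance as a sum of independent indicator variables and apply a multiplicative Chernoff bound around its mean. For the fixed vertices $u,u'$, set $Y_v = \mathbf{1}[x_u(v)\neq x_{u'}(v)]$ for each $v\in V$, so that $d(x_u,x_{u'})=\sum_{v\in V}Y_v$. Since all edges of the random graph are drawn independently, the two edges $(u,v)$ and $(u',v)$ are independent for every $v$; hence each $Y_v$ is a Bernoulli variable and the $Y_v$ are mutually independent across $v$. The same reasoning applies verbatim to the pair $u,u''$. Thus both distances are sums of independent $\{0,1\}$ variables, and the only remaining tasks are to compute their expectations and to establish concentration.

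First I would compute $\Exp{d(x_u,x_{u'})}$ by a case split on the membership of $v$. For $u,u'\in U_i$: if $v\in V_i$ then both edges are present with probability $p$, so $\Prob{Y_v=1}=2p(1-p)$; if $v\in V\setminus V_i$ then both are present with probability $q$, so $\Prob{Y_v=1}=2q(1-q)$. Summing gives $\Exp{d(x_u,x_{u'})}=2\abs{V_i}p(1-p)+2\abs{V\setminus V_i}q(1-q)$, which is exactly the bracketed expression in the first inequality. For $u\in U_i$, $u''\in U_{i'}$ with $i\neq i'$ I would split $V$ into $V_i\cap V_{i'}$ (both edges present with probability $p$, disagreement probability $2p(1-p)$), the symmetric difference $V_i\triangle V_{i'}$ (one endpoint connects with probability $p$ and the other with $q$, disagreement probability $p(1-q)+q(1-p)$), and $V\setminus(V_i\cup V_{i'})$ (both edges have probability $q$, disagreement probability $2q(1-q)$). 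This yields precisely the bracketed expression in the second inequality.

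Next I would invoke a multiplicative Chernoff bound: for a sum $X$ of independent $[0,1]$ variables with mean $\mu$, $\Prob{\abs{X-\mu}>0.01\,\mu}\le 2\exp(-c\mu)$ for an absolute constant $c$ (using the upper tail for the first inequality and the lower tail for the second). It therefore suffices to show $\mu=\Omega(\lg n)$ in both cases with a sufficiently large hidden constant. In the same-cluster case, $\mu\ge 2\abs{V_i}p(1-p)\ge 2K_3\,p(1-p)\lg n$, and since $p\in[1/2,0.99]$ we have $p(1-p)\ge 0.0099$, so $\mu=\Omega(\lg n)$. In the different-cluster case the term over $V_i\triangle V_{i'}$ dominates: using $\abs{V_i\triangle V_{i'}}\ge K_4 s\ge K_4\abs{V_i}\ge K_4K_3\lg n$ together with $p(1-q)+q(1-p)\ge p(1-q)\ge 1/4$ (valid because $q$ is small by $q\le K_1 p s/n$ with $s\ll n$), again $\mu=\Omega(\lg n)$. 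Choosing $K_3,K_4$ large enough makes $2\exp(-c\mu)\le \tfrac12 m^{-3}$ for each distance, and a union bound over the two events gives the claimed failure probability $m^{-3}$.

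The main obstacle I anticipate is purely in the bookkeeping of the concentration step rather than in any single hard inequality: one must choose the constants $K_3,K_4$ so that the \emph{same} bound simultaneously forces the multiplicative deviation below $1\%$ and the failure probability below $m^{-3}$, which requires the mean to exceed a fixed multiple of $\lg m$. This is where one uses that $\lg n=\Omega(\lg m)$ under the model's size assumptions, so that the $\Omega(\lg n)$ lower bound on $\mu$ translates into the $m^{-3}$ tail needed for the later union bound over all $O(m^2)$ pairs of vertices. Everything else is the routine expectation computation and a textbook Chernoff estimate.
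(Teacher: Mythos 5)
Your proof is correct and follows essentially the same route as the paper's: decompose each Hamming distance into independent per-coordinate disagreement indicators, compute the disagreement probabilities by the same case split ($2p(1-p)$, $2q(1-q)$, $p(1-q)+q(1-p)$), lower-bound the means via $\abs{V_i}\geq K_3\lg n$ and $\abs{V_i\triangle V_{i'}}\geq K_4 s$, and conclude with a multiplicative Chernoff bound and a union bound. If anything, you are more explicit than the paper about the concentration bookkeeping, in particular the need for $\lg n=\Omega(\lg m)$ to turn an $\exp(-\Omega(\lg n))$ tail into the stated $m^{-3}$ failure probability, a point the paper leaves implicit in ``setting $K_3$ large enough.''
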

\begin{proof}
	First, recall that the neighbors of $u, u'$ and $u''$ are random variables
	such that if $u\in U_i$ then
	\begin{align*}
		\Prob{ (u,v_j) \in E}
		=
		\begin{cases}
			p, & v_j \in V_i, \\
			q, & v_j \in V \setminus V_i.
		\end{cases}
	\end{align*}
	Since $u$'s neighbors are random this implies that the vector $x_u$ is a
	random vector with $\Prob{x_u(j) = 1} = \Prob{ (u,v_j) \in E}$. Next,
	observe that we can rewrite the event $\{ x_u(j) \neq x_{u'}(j) \}$ 
	as $\{ x_u(j) = 1 \text{ and } x_{u'}(j) = 0 \} \cup \{ x_u(j) = 0 \text{ and } x_{u'}(j) = 1 \}$.
	Together, this implies for vertices from the same cluster,
	\begin{align*}
		\Prob{ x_u(j) \neq x_{u'}(j) } =
		\begin{cases}
			2 p (1-p), & v_j \in V_i, \\
			2 q (1-q), & v_j \in V \setminus V_i.
		\end{cases}
	\end{align*}
	Similarly, we obtain for vertices from different clusters,
	\begin{align*}
		&\Prob{ x_u(j) \neq x_{u''}(j) } = 
		\begin{cases}
			p (1-q) + q(1-p), & v_j \in V_i \triangle V_{i'}, \\
			2 p (1-p), & v_j \in V_i \cap V_{i'}, \\
			2 q (1-q), & v_j \not\in V_i \cup V_{i'}.
		\end{cases}
	\end{align*}
	Next, using linearity of expectation we get that
	\begin{align*}
		\Exp{ d(x_u,x_{u'})  }
			=& \sum_{j=1}^n \Prob{ x_u(j) \neq x_{u'}(j)} \\
			=& 2 |V_i| p(1-p) + 2 (|V \setminus V_i|) q(1-q), \\
		\Exp{ d(x_u,x_{u''}) }
			=& |V_i \triangle V_{i'}|(p(1-q) + q(1-p)) \\
				&+ 2 |V_i \cap V_{i'}| p(1-p) \\
				&+ 2 |V \setminus (V_i \cup V_{i'})| q(1-q).
	\end{align*}

	Since $|V_i|\geq K_3 \lg n$ and
	$|V_i \triangle V_{i'}| \geq K_4 s \geq K_3 K_4 \lg n$,
	\begin{align*}
		\Exp{ d(x_u,x_{u'})  } &\geq 2 p(1-p) |V_i|
			\geq 2 K_3 p(1-p) \lg n, \\
		\Exp{ d(x_u,x_{u''}) }
			&\geq |V_i \triangle V_{i'}|(p(1-q) + q(1-p)) \\
			&\geq K_4 p(1-q) s
			\geq K_3 K_4 p(1-q) \lg n.
	\end{align*}
	A Chernoff bound and setting $K_3$ large enough implies the
	lemma (we will set $K_4$ later independently of $K_3$).
\end{proof}

Next, we show that when setting $\alpha=0.49 K_4 s$ in
Algorithm~\ref{algo:bmf:greedy}, the algorithm clusters all left-side vertices
correctly.
\begin{lemma}
\label{lem:bmf:correct-clustering}
	The following events hold w.h.p.: (1)~When
	Algorithm~\ref{algo:bmf:greedy} finishes, $|C| = k$ and for all $i$, $C$ contains exactly one
	center $c$ with $c\in U_i$.  (2)~For all $i$, there exists a center $c_i
	\in C$ s.t.\ all points $u \in U_i$ were assigned to~$c_i$. 
\end{lemma}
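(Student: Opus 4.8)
The plan is to condition on the distance bounds of Lemma~\ref{lem:bmf:distances} holding simultaneously for \emph{all} pairs of left-side vertices, and then to run a clean induction over the stream showing that the threshold $\alpha = 0.49 K_4 s$ separates intra-cluster from inter-cluster distances. First I would apply Lemma~\ref{lem:bmf:distances} together with a union bound: since each pairwise bound fails with probability at most $m^{-3}$ and there are fewer than $m^2$ pairs $u,u'$, all of the bounds hold simultaneously with probability at least $1 - 1/m$, i.e.\ w.h.p. From here on I would work on this event, so that the distance between any vertex and any center (recall centers are themselves vertices) obeys the bounds of the Lemma.

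Next I would turn these probabilistic bounds into two deterministic separation inequalities. For the intra-cluster case I would plug $\abs{V_i}\leq s$, $p(1-p)\leq 1/4$ and $q(1-q)\leq q\leq K_1 p s/n$ into the upper bound, giving $d(x_u,x_{u'}) < 1.01(s/2 + 2K_1 p s) \leq 1.01\,s(1/2 + 2K_1)$; choosing $K_4$ large enough relative to $K_1$ makes this strictly smaller than $\alpha = 0.49 K_4 s$. For the inter-cluster case I would discard the two non-negative terms $2\abs{V_i\cap V_{i'}}p(1-p)$ and $2\abs{V\setminus(V_i\cup V_{i'})}q(1-q)$ and use $\abs{V_i\triangle V_{i'}}\geq K_4 s$ together with $p(1-q)+q(1-p)\geq p(1-q)$, obtaining $d(x_u,x_{u''}) > 0.99 K_4 s \cdot p(1-q)$; since $p\geq 1/2$ and $q\leq K_1 p s/n$ is tiny, we have $p(1-q) > 0.49/0.99$, so this exceeds $\alpha$. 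Crucially, the requirement on $K_4$ coming from the intra-cluster case is consistent with the inter-cluster case, because the latter only constrains $p(1-q)$ and is independent of $K_4$.

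Finally I would run an induction over the stream with the invariant: after processing any prefix, $C$ contains at most one center from each cluster, and every processed $u\in U_i$ was assigned to (or opened) the unique center of $U_i$. When a new vertex $u\in U_i$ arrives there are two cases. If no center of $U_i$ exists yet, every existing center lies in another cluster, so the inter-cluster bound gives $\min_{c\in C} d(x_u,x_c) > \alpha$ and $u$ opens a new center (this also covers the very first vertex, where $C=\emptyset$). If a center $c_i\in U_i$ already exists, the intra-cluster bound gives $d(x_u,x_{c_i}) < \alpha$ so $u$ is assigned, and since every other center is strictly farther than $\alpha > d(x_u,x_{c_i})$ we get $\argmin_{c\in C} d(x_u,x_c) = c_i$. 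In both cases the invariant is preserved, so after the whole stream there is exactly one center per cluster: this gives $\abs{C}=k$ with one center in each $U_i$, proving~(1), and every $u\in U_i$ assigned to that center, proving~(2).

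The main obstacle is the constant bookkeeping in the middle step: one must choose $K_4$ large enough to absorb both the signal term ($\approx s/2$) and the noise term ($\approx 2K_1 p s$) below $\alpha$, while simultaneously verifying that the inter-cluster separation survives at the \emph{same} $\alpha$. The two observations that make this clean are that the inter-cluster lower bound is dominated by the symmetric-difference term (the remaining contributions are non-negative and can simply be dropped), and that the condition $q\leq K_1 p s/n$ forces $q$ to be small, so $p(1-q)\approx p\geq 1/2$ and the required inequality $0.99\,p(1-q) > 0.49$ holds with room to spare.
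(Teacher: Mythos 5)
Your proposal is correct and follows essentially the same route as the paper's proof: condition on Lemma~\ref{lem:bmf:distances} for all pairs via a union bound, derive the intra-cluster upper bound $1.01(1/2+2K_1)s$ and inter-cluster lower bound $0.99 K_4 s\, p(1-q)$ with $K_1$ small and $K_4 \gtrsim (1/2+2K_1)$, and conclude that the threshold $\alpha = 0.49 K_4 s$ makes the greedy clustering open exactly one center per cluster. Your explicit induction over the stream is just a more carefully structured version of the paper's case analysis, so there is nothing to change.
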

\begin{proof}
	First, we condition on the event from Lemma~\ref{lem:bmf:distances} occurring
	for each pair of vertices from $U$ for the rest of the proof. A union bound
	implies that this happens with probability at least $1-m^{-1}$.

	Second, consider $u, u' \in U_i$. Then
	\begin{align*}
		d(x_u,x_{u'})
		&< 1.01 \left[2 s p(1-p) + 2n q (1-q) \right] \\
		&\leq 1.01 \left[s/2 + 2n \frac{K_1 s}{n} \right] 
		\leq 1.01 (1/2 + 2K_1) s,
	\end{align*}
	where we used $p(1-p) \leq 1/4$ and $q \leq K_1 p s/n \leq K_1 s/n$.

	Third, for $u \in U_i$ and $u'' \in U_{i'}$ for $i \neq i'$,
	\begin{align*}
		d(x_u, x_{u''})
		&> 0.99 \large[ |V_i \triangle V_{i'}|(p(1-q) + q(1-p)) \\
							&\,\,\,\, + 2 |V_i \cap V_{i'}| p(1-p) 
							+ 2 |V \setminus (V_i \cup V_{i'})| q(1-q) \large] \\
		&\geq 0.99 \large[ K_4 s p (1-q) + 0 + 0 \large]
		\geq 0.98 K_4 s/2,
	\end{align*}
	where we used that $|V_i \triangle V_{i'}| \geq K_4 s$ and further
	$p(1-q) \geq p - K_1 p^2 s / n \geq p - K_1 p^2 \geq \frac{0.98}{0.99} \cdot \frac{1}{2}$,
	since $p\geq 1/2$ and since we can pick $K_1$ small enough to satisfy the
	last inequality.
	
	Pick $K_1,K_4$ with $K_4 \geq \frac{2.02}{0.98} (1/2 + 2K_1)$.
	Then $d(x_u, x_{u''})
		> 0.98 K_4 s/2
		\geq 1.01 (1/2 + 2K_1) s
		>d(x_u,x_{u'})$.

	Next, we show that Algorithm~\ref{algo:bmf:greedy} satisfies the properties of
	the lemma with $\alpha=0.98 K_4 s /2$.
	To prove~(1), suppose a vertex $u\in U_i$ is processed and for all
	$c\in C$, $d(x_u, x_c) > \alpha$. Then $C$ cannot contain any
	point $u'\in U_i$ (if $C$ contained such a point, then the previous
	computation and the event we conditioned on imply $d(x_u,x_{u'}) \leq
	\alpha$). Thus, opening $u$ as a new center is the correct choice and
	$C$ contains exactly one center from $U_i$.  To prove~(2), suppose that a
	vertex $u \in U_i$ is processed and $d(x_u,x_c)\leq\alpha$ for some $c\in C$.
	The previous computation and the event we conditioned on imply
	that $c\in U_i$. Thus, all $u\in U_i$ are assigned to the same $c\in C$.
\end{proof}

Next, we show that all left-side vertices have degree $O(s)$.
\begin{lemma}
\label{lem:bmf:degree}
	With probability at least $1-n^{-2}$, each vertex $u\in U$ has degree $O(s)$.
\end{lemma}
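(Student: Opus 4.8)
The plan is to fix a single left-side vertex, bound its expected degree using the assumption $q \le K_1 ps/n$, observe that this expectation is simultaneously $O(s)$ and $\Omega(\lg n)$, apply a multiplicative Chernoff bound, and conclude with a union bound over all of $U$. First I would fix a vertex $u \in U_i$. By the model of Equation~\eqref{eq:bmf:random-edges}, the edges incident to $u$ appear independently, so $\deg(u) = \sum_{j=1}^n \1[(u,v_j)\in E]$ is a sum of $n$ independent Bernoulli variables with success probability $p$ for $v_j \in V_i$ and $q$ for $v_j \in V\setminus V_i$, exactly as in the expectation computation of Lemma~\ref{lem:bmf:distances}. Writing $\mu = \Exp{\deg(u)}$, linearity of expectation together with $\abs{V_i}\le s$, $q \le K_1 ps/n$ and $p\le 1$ gives
\begin{align*}
	\mu = p\abs{V_i} + q\abs{V\setminus V_i} \le ps + qn \le ps + K_1 ps = (1+K_1)ps \le (1+K_1)s,
\end{align*}
so $\mu = O(s)$.

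The step that makes the concentration tight is to note that $\mu$ is also not too small: since $p \ge 1/2$ and $\abs{V_i} \ge K_3\lg n$, we have $\mu \ge p\abs{V_i} \ge \tfrac12 K_3\lg n = \Omega(\lg n)$. Taking the threshold constant $C = 2(1+K_1)$ we get $Cs \ge 2\mu$, so $\{\deg(u)\ge Cs\}\subseteq\{\deg(u)\ge 2\mu\}$ and the multiplicative Chernoff bound (with deviation parameter $\delta=1$) yields
\begin{align*}
	\Prob{\deg(u) \ge Cs} \le \Prob{\deg(u) \ge 2\mu} \le e^{-\mu/3} \le n^{-\Omega(K_3)}.
\end{align*}
Hence each fixed vertex has degree at most $Cs = O(s)$ except with probability $n^{-\Omega(K_3)}$.

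Finally I would union bound over all $m$ vertices of $U$. Choosing $K_3$ large enough that the per-vertex exponent dominates the number of vertices (i.e., so that $m\cdot n^{-\Omega(K_3)} \le n^{-2}$, which holds whenever $m = n^{O(1)}$, the standard regime for this model), the simultaneous failure probability is at most $n^{-2}$, as claimed. The one genuinely delicate point is precisely this last step: the target probability is phrased in terms of $n$ whereas the union is over the $m$ left-side vertices, so one must make the single-vertex tail small enough to absorb the factor $m$. The lower bound $\mu = \Omega(\lg n)$ coming from the hypothesis $\abs{V_i}\ge K_3\lg n$ is exactly what supplies this slack, and beyond tracking the constant $K_3$ there is no real obstacle.
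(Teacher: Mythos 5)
Your proof is correct and follows essentially the same route as the paper's: bound the expected degree above by $O(s)$ via $q \le K_1 p s/n$, bound it below by $\Omega(\lg n)$ via $\abs{V_i} \ge K_3 \lg n$ and $p \ge 1/2$, and apply a multiplicative Chernoff bound with $K_3$ chosen large enough. If anything, you are more careful than the paper at the last step: the paper's proof stops at the per-vertex bound $1-n^{-2}$ and leaves the union bound over all of $U$ (and the implicit assumption $m = n^{O(1)}$ it requires) unstated, deferring it to the point where the lemma is later applied.
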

\begin{proof}
	Let $u \in U_i$ and let $d(u)$ be the degree of $u$. Then we get that
		$\Exp{d(u)} = p |V_i| + q |V \setminus V_i|
			\leq p s + (K_1 s / n) n
			= O(s)$.
	Since $\Exp{d(u)} \geq p |V_i| \geq K_3 p \lg n$,
	we can apply a Chernoff bound to obtain that for large enough $K_3$ it holds
	that
	$d(u) \in [0.99\Exp{d(u)},1.01\Exp{d(u)}]$ with probability at
	least $1-n^{-2}$.
\end{proof}

Now we show that Algorithm~\ref{algo:bmf:greedy} indeed returns the correct
right-side clusters if we set $\theta = 0.75 p$.
\begin{lemma}
\label{lem:bmf:postprocessing}
	With high probability Algorithm~\ref{algo:bmf:greedy} returns clusters
	$\tilde{V_1},\dots,\tilde{V_k}$
	such that $\{ \tilde{V_1},\dots,\tilde{V_k}\} = \{V_1,\dots,V_k\}$.
\end{lemma}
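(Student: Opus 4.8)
The plan is to condition on the high-probability events established in Lemmas~\ref{lem:bmf:correct-clustering} and~\ref{lem:bmf:degree} and then argue that the thresholding in the postprocessing step correctly separates, for each center, the right-side vertices inside its planted cluster from those outside. By Lemma~\ref{lem:bmf:correct-clustering}, when the algorithm terminates there are exactly $k$ centers $c_1,\dots,c_k$, where $c_i$ is the unique center to which all of $U_i$ was assigned, so $n_{c_i}=\abs{U_i}$. It therefore suffices to show that with high probability $\tilde{V}_{c_i}=V_i$ for every $i$; since the centers are in bijection with $U_1,\dots,U_k$, this gives $\{\tilde{V_1},\dots,\tilde{V_k}\}=\{V_1,\dots,V_k\}$.

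Fix a center $c_i$ and a vertex $v_j\in V$, and let $N_j=\abs{\{u\in U_i:(u,v_j)\in E\}}$ be the true number of neighbors of $v_j$ among the vertices assigned to $c_i$. By mergeability of the heavy hitters data structures (Section~\ref{sec:bmf:preliminaries:heavy-hitters}), the counter stored for $v_j$ in $\MG(c_i)$ equals the frequency of $j$ in the concatenation of the edge-lists of all $u\in U_i$, so it is an estimate $\hat N_j$ of $N_j$ with $\abs{\hat N_j-N_j}\le\Delta$, where $\Delta=\eps N/2$, $N=\sum_{u\in U_i}\deg(u)$ is the merged stream length, and $\eps=O(1/s)$ is the accuracy of the data structure. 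By Lemma~\ref{lem:bmf:degree} every $u$ has degree $O(s)$, so $N=O(s\abs{U_i})$ and $\Delta\le\delta\abs{U_i}$, where $\delta$ is a constant that can be driven below any fixed bound by enlarging the hidden constant in the $O(s)$ counter budget; fix this budget so that $\delta\le 0.12$, and note $\delta\le 0.24p<0.25p$ since $p\ge 1/2$.

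Next I would establish the signal/noise separation by concentration. For $v_j\in V_i$, the count $N_j$ is a sum of $\abs{U_i}$ independent $\mathrm{Bernoulli}(p)$ variables with mean $p\abs{U_i}=\Omega(\lg n)$ (using $\abs{U_i}\ge K_2\lg n$ and $p\ge 1/2$), so a multiplicative Chernoff bound gives $N_j\ge 0.99\,p\,\abs{U_i}$ with probability $1-n^{-\Omega(K_2)}$; together with $\hat N_j\ge N_j-\delta\abs{U_i}$ this yields $\hat N_j\ge(0.99p-\delta)\abs{U_i}>0.75\,p\,\abs{U_i}=\theta n_{c_i}$, so $v_j\in\tilde{V}_{c_i}$. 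For $v_j\notin V_i$, the count $N_j$ is a sum of $\abs{U_i}$ independent $\mathrm{Bernoulli}(q)$ variables with mean $q\abs{U_i}\le K_1 p\,(s/n)\,\abs{U_i}$; since this mean is only a small multiple of the threshold $0.75\,p\,\abs{U_i}=\Omega(\lg n)$ (as $K_1$ is small), an upper-tail Chernoff bound gives $N_j\le 0.5\,p\,\abs{U_i}$ with probability $1-n^{-\Omega(K_2)}$, whence $\hat N_j\le N_j+\delta\abs{U_i}\le(0.5p+\delta)\abs{U_i}<0.75\,p\,\abs{U_i}=\theta n_{c_i}$, so $v_j\notin\tilde{V}_{c_i}$. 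A union bound over all $k$ centers and all $n$ right-side vertices, with $K_2$ large enough to beat the at most $n^2$ events, shows all decisions are simultaneously correct with high probability, and the lemma follows.

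The main obstacle I anticipate is controlling the heavy hitters error against the signal/noise gap: because the merged stream has length $\Theta(s\abs{U_i})$ while the data structure has only $O(s)$ counters, the additive error $\Delta$ is $\Theta(\abs{U_i})$, the same order as both the threshold $\theta n_{c_i}$ and the separation between signal and noise counts. The argument only closes because $p\ge 1/2$ keeps the signal a constant factor above the threshold, $q\le K_1ps/n$ with $K_1$ small keeps the noise well below it, and the degree bound of Lemma~\ref{lem:bmf:degree} lets us shrink the relative error constant $\delta$ by enlarging the (still $O(s)$) counter budget; making the constants $K_1$, $K_2$, and the counter constant line up simultaneously is the delicate bookkeeping here.
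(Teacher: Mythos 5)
Your proposal is correct and follows essentially the same route as the paper's proof: condition on Lemma~\ref{lem:bmf:correct-clustering}, observe that the exact frequency of $j$ in $\MG(c_i)$ is the neighbor count of $v_j$ in $U_i$, apply Chernoff bounds (signal above $0.99p\abs{U_i}$, noise below $0.5p\abs{U_i}$), use Lemma~\ref{lem:bmf:degree} to bound the merged stream length by $O(s\abs{U_i})$ so that $O(s)$ counters give additive error a small constant times $\abs{U_i}$, and threshold at $\theta=0.75p$ with a union bound over all centers and right-side vertices. Your handling of the noise tail (directly bounding $N_j\le 0.5p\abs{U_i}$ by an upper-tail Chernoff bound for deviations far above the small mean $q\abs{U_i}$) is in fact slightly more careful than the paper's phrasing, but it is the same argument with marginally different constants.
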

\begin{proof}
	Condition on the events from Lemma~\ref{lem:bmf:correct-clustering}. Let
	$i \in [k]$ and suppose $c \in C$ satisfies $c \in U_i$.  We show
	$\tilde{V_c} = V_i$.

	Consider the heavy hitters data structure $\MG(c)$. Recall that when a
	vertex $u\in U$ is assigned to $c$, we added all $j \in [n]$ to $\MG(c)$
	with $(u,v_j)\in E$. Hence, the stream $X$ of numbers that were
	processed by $\MG(c)$ satisfies that the frequency $f_j$ of $j$ is exactly
	$f_j = |\{ u \in U_i : (u,v_j) \in E \}|$.
	
	From the random graph model we get that $\Exp{f_j} = p |U_i|$ if $v_j \in
	V_i$ and $\Exp{f_j} = q |U_i|$ if $v_j \not\in V_i$.  Using
	a Chernoff bound and $|U_i| \geq K_2 \lg n$, we get that when $K_2$
	is large enough, $f_j > 0.99 p |U_i|$ if $v_j \in V_i$ and
	$f_j < 1.01q |U_i| \leq 0.5 p |U_i|$ if $v_j \not\in V_i$ with probability at least $1-n^{-2}$. Using a
	union bound, we get that the previous event holds for all $j \in [n]$
	simultaneously with probability at least $1-n^{-1}$.  We condition on this
	event for the rest of the proof.

	The total number of points inserted into $\MG(c)$ is $|X| =
	\sum_{u \in U_i} d(u)$ and using Lemma~\ref{lem:bmf:degree} and a union bound,
	$|X| = O(|U_i| s)$ with high probability. Thus, if we run
	$\MG(c)$ with $\varepsilon = C p / (2s)$ for some suitable constant $C$, we
	get that $\MG(c)$ uses space $O(1/\varepsilon) = O(s)$ and provides an
	approximation $\hat{f_j}$ of each $f_j$ within additive error $\varepsilon
	|X| \leq 0.1 p |U_i|$.

	Thus, if $v_j \in V_i$ then $\hat{f_j} \geq f_j - \varepsilon|X| \geq 0.89 p |U_i|$
	and if $v_j\not\in V_i$ then $\hat{f_j} \leq f_j + \varepsilon|X| \leq 0.6 p |U_i|$.
	Setting $\theta = 0.75 p$ we get that the algorithm satisfies
	$\tilde{V_c} = V_i$.
\end{proof}

Now we analyze the space and running time of the algorithm.
\begin{lemma}
	W.h.p.\ the space usage of Algorithm~\ref{algo:bmf:greedy} is $O(ks)$ and its
	running time is $O(mks)$.
\end{lemma}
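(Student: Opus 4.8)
The plan is to condition on the high-probability events established in the preceding lemmas and then bound space and running time deterministically under those events. Concretely, I would first invoke Lemma~\ref{lem:bmf:correct-clustering}, which guarantees that w.h.p.\ the algorithm opens exactly one center per cluster; since there are only $k$ clusters and a vertex is opened as a center only when it is far from every current center, this yields $|C| \leq k$ at all times and $|C| = k$ at termination. I would also invoke Lemma~\ref{lem:bmf:degree} to ensure that every vertex $u \in U$, and in particular every center $c$, has degree $O(s)$, so that the sparse representation of $x_c$ occupies $O(s)$ space.

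For the space bound, the idea is that the only non-trivial storage is the set of centers together with their auxiliary data. There are at most $k$ centers, and for each center $c$ we store the sparse vector $x_c$ (using $O(s)$ space by Lemma~\ref{lem:bmf:degree}), the counter $n_c$, and the heavy hitters structure $\MG(c)$. As established in the proof of Lemma~\ref{lem:bmf:postprocessing}, running each $\MG(c)$ with $\varepsilon = \Theta(p/s)$ gives $O(1/\varepsilon) = O(s)$ counters. Summing over the $k$ centers yields total space $O(ks)$.

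For the running time, I would charge the cost to the per-vertex processing in the main loop. When a vertex $u$ arrives, the algorithm computes $d = \min_{c \in C} d(x_u, x_c)$; since both $x_u$ and each $x_c$ are stored sparsely with $O(s)$ non-zeros, a single Hamming distance can be evaluated in $O(s)$ time, and there are at most $k$ centers, so locating the closest center costs $O(ks)$ per vertex. If $u$ is assigned to $c(u)$, the algorithm builds $\MG(u)$ from the $O(s)$ neighbors of $u$ and merges it into $\MG(c(u))$; both operations cost $O(s)$ because the structures hold $O(s)$ counters. Hence each of the $m$ vertices is processed in $O(ks)$ time, for a total of $O(mks)$. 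The postprocessing loop iterates over the $k$ centers and reads off $\tilde{V_c}$ from the $O(s)$ counters of each $\MG(c)$, costing $O(ks)$ overall, which is dominated by $O(mks)$.

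The main subtlety I expect is making the Hamming-distance evaluation genuinely $O(s)$ rather than $O(n)$: this requires maintaining the vectors $x_u$ and $x_c$ in a sparse format and computing $d(x_u, x_c)$ as the size of the symmetric difference of their supports, which can be done in $O(s)$ time via sorted lists or hashing. Everything else is a routine accounting exercise in which the distance computations dominate both the merge steps (total $O(ms)$) and the postprocessing (total $O(ks)$), so the overall bounds $O(ks)$ space and $O(mks)$ time follow on the conditioned event, which holds w.h.p.
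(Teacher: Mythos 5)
Your proof is correct and follows essentially the same route as the paper's: condition on Lemma~\ref{lem:bmf:correct-clustering} and Lemma~\ref{lem:bmf:degree} to obtain $k$ centers of degree $O(s)$, each with an $O(s)$-counter heavy hitters structure as argued in Lemma~\ref{lem:bmf:postprocessing}, and charge $O(ks)$ per vertex for the nearest-center computation, with postprocessing costing $O(ks)$. The only cosmetic difference is that you charge $O(s)$ per merge of heavy hitters structures whereas the paper cites constant amortized merge time; either way the merges are dominated by the distance computations, so the stated bounds are unaffected.
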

\begin{proof}
	Conditioning on Lemma~\ref{lem:bmf:correct-clustering}, the algorithm only
	stores $k$ centers.  Storing a single center takes space $O(s)$ to store its
	neighbors by Lemma~\ref{lem:bmf:degree}. Furthermore, for a single center we
	need to store its heavy hitters data structure.  As we argued in the proof
	of Lemma~\ref{lem:bmf:postprocessing} it suffices to use the heavy hitters data
	structure with $O(s)$ counters for each center. Thus, the total space usage
	is $O(ks)$.

	Observe that for each $u\in U$ the running time is dominated by computing $d
	= \min_{c\in C} d(x_u, x_c)$. As there are only $k$ centers $c\in C$ and
	since all $u\in U$ and $c\in C$ have only $O(s)$ neighbors, we can compute
	$d$ in time $O(ks)$. Merging the heavy hitters data structures can be done
	in constant amortized time. Thus, the total running time for the pass over
	the stream is $O(mks)$ since $|U|=m$. In the postprocessing step, we only
	spend time $O(ks)$ because each of the heavy hitters data structures only
	contains $O(s)$ counters.
\end{proof}

\subsection{Proof of Proposition~\ref{prop:bmf:space}}
	Any algorithm to solve the biclustering problem must be able to output the
	planted clusters $V_1,\dots,V_k$. Suppose that each $V_i$ consists of $s$
	vertices and that all $V_i$ are mutually disjoint. Then there are
	$\binom{n}{ks}$ possibilities to pick the $V_i$.  Thus, any algorithm that
	is able to return the $V_i$ exactly must use at least $\lg \binom{n}{ks} =
	\Omega(\lg n^{ks}) = \Omega(ks \lg n)$ bits. Since the standard word RAM
	model of computation is considering words of size $\Theta(\lg n)$, this
	yields a lower bound of $\Omega(ks)$ space.

\subsection{Analysis Sketch for \sofa}
\label{sec:bmf:analysis-importance}
We note that it is possible to adapt the above analysis to obtain provable
guarantees for \sofa (Algorithm~\ref{algo:bmf:importance}) but they are weaker
than the ones from Theorem~\ref{thm:bmf:algo-toy}.  We now provide a rough
sketch of how to obtain these guarantees.

Essentially, we would like to follow the same strategy as in the proof of
Theorem~\ref{thm:bmf:algo-toy}: First, we show that all left-side vertices are
clustered correctly (as in Lemma~\ref{lem:bmf:correct-clustering}). Second, we
show that (if the first step succeeded) the correct clusters are returned (as in
Lemma~\ref{lem:bmf:postprocessing}). While this high-level strategy remains the
same, some adjustments have to be made in the details.

For the first part, we would like to obtain a result similar to
Lemma~\ref{lem:bmf:correct-clustering} stating that the algorithm clusters the
left-side vertices correctly. However, this is not correct for the streaming
$k$-Median algorithm in~\cite{braverman11streaming} which we use as a
subroutine.  However, one can show that if the distance of vertices from
different clusters is a factor $\Omega(k)$ larger than the distance of vertices
from the same cluster, then the algorithm from~\cite{braverman11streaming}
clusters only an $\varepsilon$-fraction of the vertices incorrectly.
More formally, one can show that under some conditions on the $U_i,V_i,p$ and
$q$, there exists a constant $C$ such that if $Ck \cdot d(x_u,x_{u'}) <
d(x_u,x_{u''})$ for $u,u'\in U_i$ and $u''\in U_{j}$, $j\neq i$, then only an
$\varepsilon$-fraction of the vertices in $U$ is clustered incorrectly. Here,
the constant $C$ depends on the approximation ratio of the algorithm
in~\cite{braverman11streaming}. This is a standard result in the clustering
community. Note that the previous condition on the distances of vertices from
different clusters is much stronger than what we used in the proof of
Lemma~\ref{lem:bmf:correct-clustering} (where the distances only differ by a
small constant factor).

Now, with the new version of Lemma~\ref{lem:bmf:correct-clustering}, we can go
through the proof of Lemma~\ref{lem:bmf:correct-clustering} and observe that it
still holds if $p\geq 1-1/O(k)$ (which again is a stronger assumption than in
Theorem~\ref{thm:bmf:algo-toy} where we assumed $p\in[1/2,0.99]$).

Finally, for the second step of our strategy, we observe that the analysis in
Lemma~\ref{lem:bmf:postprocessing} can be adjusted to the setting where a
$\varepsilon$-fraction of the vertices was clustered incorrectly (this only
changes the values of $\Exp{f_j}$ slightly and the overall proof strategy still
works under some conditions on $\varepsilon$ and the cluster size).

\section{Related Work}
\label{sec:bmf:related}
Random graph models for bipartite graphs as presented in
Section~\ref{sec:bmf:preliminaries} are usually studied under the name bipartite
stochastic block models (SBMs)~\cite{abbe2018community}. This problem has received
attention in the past~\cite{xu14jointly,lim15convex} and recently it was shown
that in bipartite graphs even very small clusters can be
recovered~\cite{neumann18bipartite,zhou19analysis,razaee19matched}.
Furthermore, if all clusters have size $\Omega(n)$, algorithms achieving
the information-theoretically optimal recovery thresholds were
presented~\cite{abbe16community,abbe16recovering,zhou18optimal}.
However, these algorithms do not work in the
streaming setting and (on the hardware we used) none of them
would be able to process the real-world datasets we considered in
Section~\ref{sec:bmf:experiments}.

Yun et al.~\cite{yun14streaming} studied SBMs in a streaming setting and
provided algorithms using $O(n^{2/3})$ bits of space when the clustering does
not have to be stored explicitly. However, their algorithm does not apply to
bipartite graphs and it assumes that all clusters have size $\Omega(n)$ which is
unrealistic in bipartite graphs as we discussed in the introduction.

Alistarh et al.~\cite{alistarh15streaming} consider a biclustering problem in
random graphs which is similar to the one studied in this paper. They provide
guarantees for recovering the left-side clusters of the graph, but they do not
provide recovery guarantees for the right-side clusters.  Furthermore, their
data generating model is more simplistic than the one used in this paper and
their algorithm can require up to $O(kn)$ space in practice.

The BMF problem
was introduced in the data mining community by Miettinen et
al.~\cite{miettinen08discrete}
and has been popular in this community ever since
\cite{miettinen12dynamic,lucchese14unifying,miettinen14mdl4bmf,hess18trustworthy,rajca19parallelization,osicka17boolean}.
Recently, the problem was also studied in the machine learning
community~\cite{ravanbakhsh16boolean,rukat17bayesian,rukat18tensormachine,liang19noisy,kumar19faster}
and the theory community~\cite{ban19ptas,fomin18approximation}. 
The only streaming algorithm for BMF is by
Bhattacharya et al.~\cite{bhattacharya19streaming}, who provided a
$4$-pass streaming algorithm which computes a $(1+\varepsilon)$-approximate
solution for BMF.  However, their algorithm is of rather theoretical nature
since it requires space
$O(n \cdot (\lg m)^{2k} \cdot 2^{\tilde{O}(2^{2k}/\varepsilon^2)})$ and since it
uses exhaustive enumeration steps which are slow in practice.
Chandran et al.~\cite{chandran16parameterized} showed that under a standard
assumption in complexity theory, any approximation algorithm for
BMF requires time $2^{2^{\Omega(k)}}$ or
$(mn)^{\omega(1)}$; this essentially rules out practical algorithms for BMF with
approximation guarantees.
For a recent survey on BMF see~\cite{miettinen2020recent}.

We are not aware of any algorithm which (like \sofa) performs a single pass over
the left-side vertices of a bipartite graph and then returns a clustering of the
right-side vertices.

\section{Conclusion}
\label{sec:bmf:conclusion}

We presented \sofa, the first algorithm which after single pass over the
left-side vertices of a bipartite graph returns the right-side clusters using
sublinear memory. We showed that after a second pass over the stream, \sofa
solves biclustering and BMF problems.  Our experiments showed that \sofa is
orders of magnitude faster and more memory-efficient than a static baseline
algorithm while still providing high quality results.  Furthermore, we proved
that under a standard random graph model, a version of \sofa can find the
planted clusters under a natural separation condition.  In future work it will
be interesting to consider streaming settings in which the edges arrive one by
one.  Since the main building blocks of \sofa (coresets and mergeable heavy
hitters data structures) extend to distributed settings, it will be interesting
to make \sofa distributed.

\section*{Acknowledgments}

We are deeply grateful to Vincent Cohen-Addad for helpful discussions during
early stages of this project and for telling the analysis in
Section~\ref{sec:bmf:analysis-importance}.  SN gratefully acknowledges the
financial support from the Doctoral Programme ``Vienna Graduate School on
Computational Optimization'' which is funded by the Austrian Science Fund (FWF,
project no.\ W1260-N35) and from the European Research Council under the
European Community's Seventh Framework Programme (FP7/2007-2013) / ERC grant
agreement No.~340506.

\bibliographystyle{plain}
\bibliography{main}

\end{document}